%% file: main.tex
\documentclass{article}
\usepackage{iclr2027_conference,times}
\iclrfinalcopy
\usepackage[T1]{fontenc}
\usepackage{amsmath,amssymb,amsthm}
\usepackage{graphicx,booktabs,array,multirow}
\usepackage{algorithm,algorithmic}
\usepackage{microtype}
\usepackage{hyperref,url}
\hypersetup{hidelinks,
  pdftitle={Direction-Conditioned Policies for Online Goal-Conditioned Reinforcement Learning},
  pdfauthor={S K Swaminathan, Damiya Gondha, Theyanesh Eswaramoorthy Rajahkrishnan, Aritra Hazra}}
\newcommand{\R}{\mathbb{R}}
\newcommand{\E}{\mathbb{E}}
\newcommand{\norm}[1]{\left\lVert#1\right\rVert}
\newtheorem{proposition}{Proposition}
\title{Direction-Conditioned Policies for Online Goal-Conditioned Reinforcement Learning}
\author{{\fontsize{9}{11}\selectfont\bfseries S K Swaminathan\textsuperscript{1}\thanks{Correspondence to: S K Swaminathan \texttt{<swami2004@kgpian.iitkgp.ac.in>}.}, Damiya Gondha\textsuperscript{1}\thanks{Equal contribution.}, Theyanesh Eswaramoorthy Rajahkrishnan\textsuperscript{2}\footnotemark[2] \& Aritra Hazra\textsuperscript{1}}\\[0.5ex]
\textsuperscript{1}Department of Computer Science and Engineering\\
\textsuperscript{2}Department of Mechanical Engineering\\
Indian Institute of Technology Kharagpur, India}
\begin{document}
\maketitle
\fancyhead{}
\renewcommand{\headrulewidth}{0pt}
\suppressfloats[t]
\input{abstract}
\input{sections/1_introduction}
\input{sections/2_related_work}
\input{sections/3_methodology}
\input{sections/4_experiments}
\input{sections/5_conclusion}
\input{statements}
\bibliography{refs}
\bibliographystyle{iclr2027_conference}
\clearpage
\appendix
\input{appendix}
\end{document}

%% file: abstract.tex
\begin{abstract}
Contrastive Reinforcement Learning (CRL) learns representations that estimate goal reachability, yet its policy remains conditioned on raw goals and therefore does not directly exploit the geometry encoded by its critic. We introduce Direction-Conditioned Policies (DCP), a method built around a small modification to CRL: DCP selects previously visited states as waypoints during online training and conditions the policy on their direction and distance in representation space. At deployment, DCP applies the same interface directly to the final goal, requiring neither waypoint selection nor planning. Across nine navigation and manipulation tasks, DCP attains higher final success rates than CRL on seven tasks and spends more time near the goal on seven. Controlled maze experiments further show that DCP captures shortest-path geometry more accurately and that the supplied direction causally influences the actor's behavior. We identify waypoint coverage and ranking as limits to exploration, and show that learned candidate generation improves goal reaching in two controlled mazes.
\end{abstract}

%% file: sections/1_introduction.tex
\section{Introduction}
\begin{figure}[t]
\centering
\includegraphics[width=0.65\linewidth]{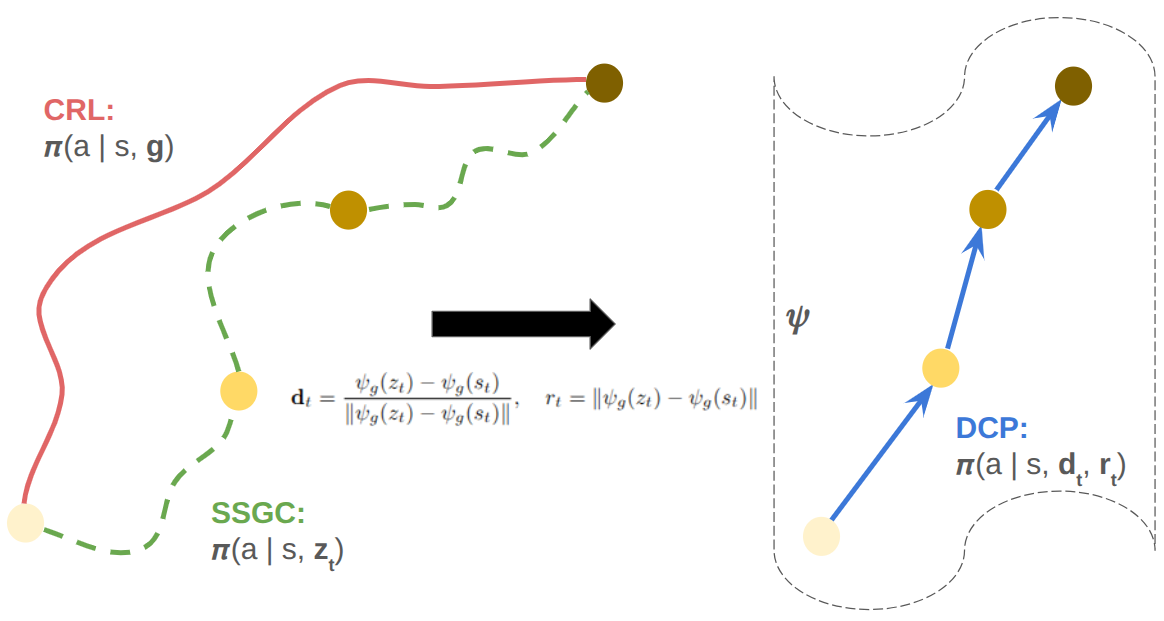}
\caption{Schematic of the policy inputs. CRL uses the raw goal, SSGC uses a selected waypoint, and DCP uses the waypoint's direction and distance in representation space. At deployment, DCP replaces the waypoint with the final goal. Here $\psi(s)$ abbreviates the encoding of the state's achieved-goal coordinates.}
\label{fig:method}
\end{figure}
When learning to achieve goals through online interaction, an agent has to discover useful behavior before it has much successful experience to draw on. Contrastive reinforcement learning (CRL; \citealp{eysenbach2022crl}) makes use of the states the agent reaches along the way. Its critic learns to distinguish achieved future goals from other goals, and the policy learns to choose actions that receive a higher critic score. This gives the agent a learning signal even when reaching the requested goal is rare. Recent work has made this approach increasingly effective through stabilization techniques, accelerated simulation, deeper networks, and action-sequence conditioning~\citep{zheng2023stabilizing, bortkiewicz2025jaxgcrl,wang2025depth,korniak2026chunks}.

However, the policy still receives the goal in raw coordinates. Consider a goal on the opposite side of a maze wall: the coordinate difference points through the wall, even though reaching the goal requires moving away from it first. The critic's representation is learned from trajectories and can capture this route structure. This raises a simple question: can we use that representation to express the goal to the policy? Doing so would give the policy direct access to a geometric relation that the critic already learns. The difficulty is that the representation itself is being learned online. Early in training, a distant goal may lie outside the agent's experience, making its learned direction unreliable. We therefore consider two parts of the problem together: how to express a target to the policy, and which targets to pursue while collecting experience. Previously visited states are natural candidate waypoints \citep{eysenbach2019search, ecoffet2021first}. Scoring these candidates against the final goal yields intermediate targets whose representations have been trained on collected experience.

We propose Direction-Conditioned Policies (DCP), a method built around a small change to the CRL actor. We encode the current achieved goal and the target using the critic's goal encoder, then supply the normalized displacement and its magnitude to the actor, along with the full physical state. During collection, the target is a scored visited waypoint. During hindsight updates, it is a future achieved state. At deployment, it is the requested final goal. In this way, the same policy input supports all three stages, and the waypoint pool and scorer can be removed at deployment. Figure~\ref{fig:method} illustrates these inputs.

Direction-conditioned policies and embedding differences have been used in skill learning and offline goal reaching~\citep{park2024metra,park2024hilp,devin2019plan}. We investigate their use within online CRL, where the policy and representation develop together through the agent's own experience. Our contributions are:
\begin{itemize}
\item An online CRL method that uses the critic's goal representation to condition the actor on direction and distance, with deployment requiring only the actor and goal encoder.
\item A five-seed comparison across nine tasks, with higher final mean success and time near goal than CRL on seven tasks each.
\item Controlled experiments that test route geometry, the actor's use of direction, and waypoint selection, including improvements from learned generation in restrictive mazes.
\end{itemize} These results also reveal a remaining difficulty: improving selected waypoints alone is insufficient to address a more disconnected, dynamic loco-manipulation task like AntSoccer.

Section~\ref{sec:related} discusses related work, Section~\ref{sec:method} describes DCP, Section~\ref{sec:results} presents the experiments, and Section~\ref{sec:conclusion} concludes. The appendices provide implementation details and additional analyses.

%% file: sections/2_related_work.tex
\section{Related Work}
\label{sec:related}
\paragraph{Contrastive representations for control.}
CRL derives a goal-reaching objective from contrastive prediction of future states~\citep{eysenbach2022crl}. JaxGCRL makes large online comparisons practical through accelerated simulation~\citep{bortkiewicz2025jaxgcrl}. Depth scaling~\citep{wang2025depth} and action-sequence conditioning~\citep{korniak2026chunks} demonstrate that changes to architecture or critic inputs can substantially alter the capabilities learned with this objective. Single-goal CRL also demonstrates directed exploration before the first successful trial, without intermediate subgoals~\citep{liu2025single}. We ask how much of the learned representation can be reused by the policy itself. DCP supplies relative goal-encoder features to the actor while retaining the baseline network depth and single-action policy.

\paragraph{Direction-conditioned skills and goal representations.}
HILP learns Hilbert representations and directional policies from offline data~\citep{park2024hilp}; METRA learns metric-aware representations and skills through online unsupervised exploration~\citep{park2024metra}. Plan arithmetic uses compositional differences and sums in learned plan representations~\citep{devin2019plan}. Dual Goal Representations encode goals through temporal distances from other states and improve offline goal reaching~\citep{park2026dual}. Relative targets also appear in FeUdal Networks, whose worker follows latent directions from a manager \citep{vezhnevets2017feudal}, HIRO's state-offset subgoals \citep{nachum2018data}, LSD's latent skill directions \citep{park2022lipschitz}, and HIQL's latent subgoal actions \citep{park2023hiql}; learned temporal distances \citep{hartikainen2020dynamical, myers2024distances} and quasimetrics
\citep{wang2023qrl, liu2023metric, myers2025offline} supply the underlying geometry. We bring learned goal relations to an online contrastive actor--critic, retaining its actor objective and learning the representation from the experience the direction-conditioned policy collects.

\paragraph{Interpolation and waypoint substitution.}
Contrastive representations can support inference over intermediate states under specific modeling assumptions~\citep{eysenbach2024interpolation}. Planning invariance examines when an on-path waypoint and a final goal induce the same action~\citep{myers2025horizon}. It arises for DCP, which trains on achieved targets but executes toward the final goal, and for flat policies bootstrapped from subgoal-conditioned ones \citep{zhou2025flattening}. We test whether on-path substitution preserves the action and how reliably the scorer selects such waypoints. 

%% file: sections/3_methodology.tex
\section{Methodology}
\label{sec:method}
\subsection{Contrastive goal-conditioned learning}
Let $s\in\mathcal S$ denote the full state, $a\in\mathcal A$ an action, and $g\in\mathcal G$ a goal. A known projection $q:\mathcal S\rightarrow\mathcal G$ extracts the achieved-goal coordinates, such as the robot's position in navigation or the object's position in manipulation. Success occurs when $q(s)$ lies within the task's goal region. CRL learns a state--action encoder $\phi_\theta(s,a)$ and a goal encoder $\psi_\theta(g)$, both mapping to $\R^k$. We use the JaxGCRL implementation~\citep{bortkiewicz2025jaxgcrl} with negative Euclidean-distance energy,
\begin{equation}
 f_\theta(s,a,g)=-\norm{\phi_\theta(s,a)-\psi_\theta(g)}_2.
 \label{eq:energy}
\end{equation}
For a minibatch of state--action pairs and their sampled future achieved goals, write $\ell_{ij}=f_\theta(s_i,a_i,g_j^+)$. Backward InfoNCE distinguishes the matching state--action pair for each goal:
\begin{equation}
 \mathcal L_{\mathrm{NCE}}(\theta)=-\frac{1}{B}\sum_{i=1}^B
 \log\frac{\exp(\ell_{ii})}{\sum_{j=1}^B\exp(\ell_{ji})}.
 \label{eq:infonce}
\end{equation}
We retain the baseline's log-sum-exp regularization, replay sampling, and entropy-regularized actor objective. Appendix~\ref{app:implementation} gives implementation details.

\subsection{A relative goal input for the actor}
We want the actor to receive the target relative to the current state in the space learned by the critic. We use the same goal encoder for both positions. For a target $y\in\mathcal G$, define
\begin{equation}
 v_\theta(s,y)=\psi_\theta(y)-\psi_\theta(q(s)),\qquad
 r_\theta(s,y)=\norm{v_\theta(s,y)}_2,\qquad
 d_\theta(s,y)=\frac{v_\theta(s,y)}{r_\theta(s,y)+\epsilon},
 \label{eq:interface}
\end{equation}
where $\epsilon=10^{-8}$ prevents division by zero. DCP replaces the raw-goal actor input with $[s,d_\theta(s,y),r_\theta(s,y)]$. The direction specifies where the target lies in the learned space, and the radius gives its distance. We retain the full physical state because the action also depends on velocities and body configuration. For example, moving an object toward the same goal requires a different approach depending on which side of the object the robot occupies.

For nonzero displacement, direction and radius jointly recover the displacement up to numerical stabilization. A waypoint and the final goal can share a direction while differing in distance; the actor receives both quantities.

\subsection{Collection, learning, and deployment}
\paragraph{Collecting experience with visited targets.}
To provide collection targets as the representation develops, we maintain a pool $\mathcal P$ of $P=512$ states drawn from recently collected trajectories. Every 25 environment steps, it samples $N=32$ candidate indices uniformly with replacement and selects
\begin{equation}
 z^*\in\arg\max_{z\in\mathcal C\subseteq\mathcal P}
 \psi_\theta(q(z))^\top\psi_\theta(g).
 \label{eq:scorer}
\end{equation}
The target $y=q(z^*)$ is held until the next refresh, while its direction relative to the current state is recomputed at each action. The inner product ranks candidates using their embedding alignment and magnitude relative to the final goal. This is a simple way to reuse the learned representation for selection. The candidates come from visited states; whether the policy can reach a selected state from its current position is examined in our experiments.

\paragraph{Learning from achieved future states.}
For learning, each replayed trajectory supplies examples of goals that were actually achieved. We sample a future state and use $y=q(s_{t+\Delta})$ as the actor's target. With $c_\theta(s,y)=[d_\theta(s,y),r_\theta(s,y)]$, the actor minimizes
\begin{equation}
 \mathcal L_\pi(\eta)=\E\left[
 \alpha\log\pi_\eta(a\mid s,c_\theta(s,y))-f_\theta(s,a,y)
 \right],\quad a\sim\pi_\eta(\cdot\mid s,c_\theta(s,y)).
 \label{eq:actor}
\end{equation}
The actor loss updates $\eta$ with the encoder parameters held fixed. The encoders learn through the contrastive objective. As the policy changes the trajectories it collects, it also changes the data from which the representation learns. The entropy coefficient is learned as in the baseline~\citep{haarnoja2018sac}.

\begin{table}[t]
\centering
\caption{Targets used for collection, actor updates, and deployment. Each stage constructs the same direction-and-distance input from its target.}
\label{tab:stages}
\begin{tabular}{lll}
\toprule
Stage & Target $y$ & Policy input \\
\midrule
Online collection & Scored visited waypoint $q(z^*)$ & $[s,d_\theta(s,y),r_\theta(s,y)]$ \\
Actor update & Achieved future state $q(s_{t+\Delta})$ & $[s,d_\theta(s,y),r_\theta(s,y)]$ \\
Deployment & Requested final goal $g$ & $[s,d_\theta(s,g),r_\theta(s,g)]$ \\
\bottomrule
\end{tabular}
\end{table}

\paragraph{Direct execution at deployment.}
At evaluation, we encode the requested final goal and use it directly as $y$ in Equation~\ref{eq:interface}. The actor then executes toward this goal at every step. This removes the pool and scorer, leaving the goal encoder and actor. Table~\ref{tab:stages} summarizes the three stages. Section~\ref{sec:substitution} measures how waypoint-to-goal substitution affects the actor's output.

%% file: sections/4_experiments.tex
\section{Experiments}
\label{sec:results}
\subsection{Does DCP improve online goal reaching?}
Figure~\ref{fig:environments} shows all nine tasks. The Ant and Humanoid navigation tasks require moving the robot to a target position around walls. AntPush additionally requires moving an obstructing block to reach the target position. AntSoccer requires moving a ball to the target. The three Pusher tasks require a robot arm to move a cylinder and differ in target placement. Appendix~\ref{app:tasks} gives their goal distributions, success conditions, and training budgets.

\begin{figure}[t]
\centering
\includegraphics[width=0.75\linewidth]{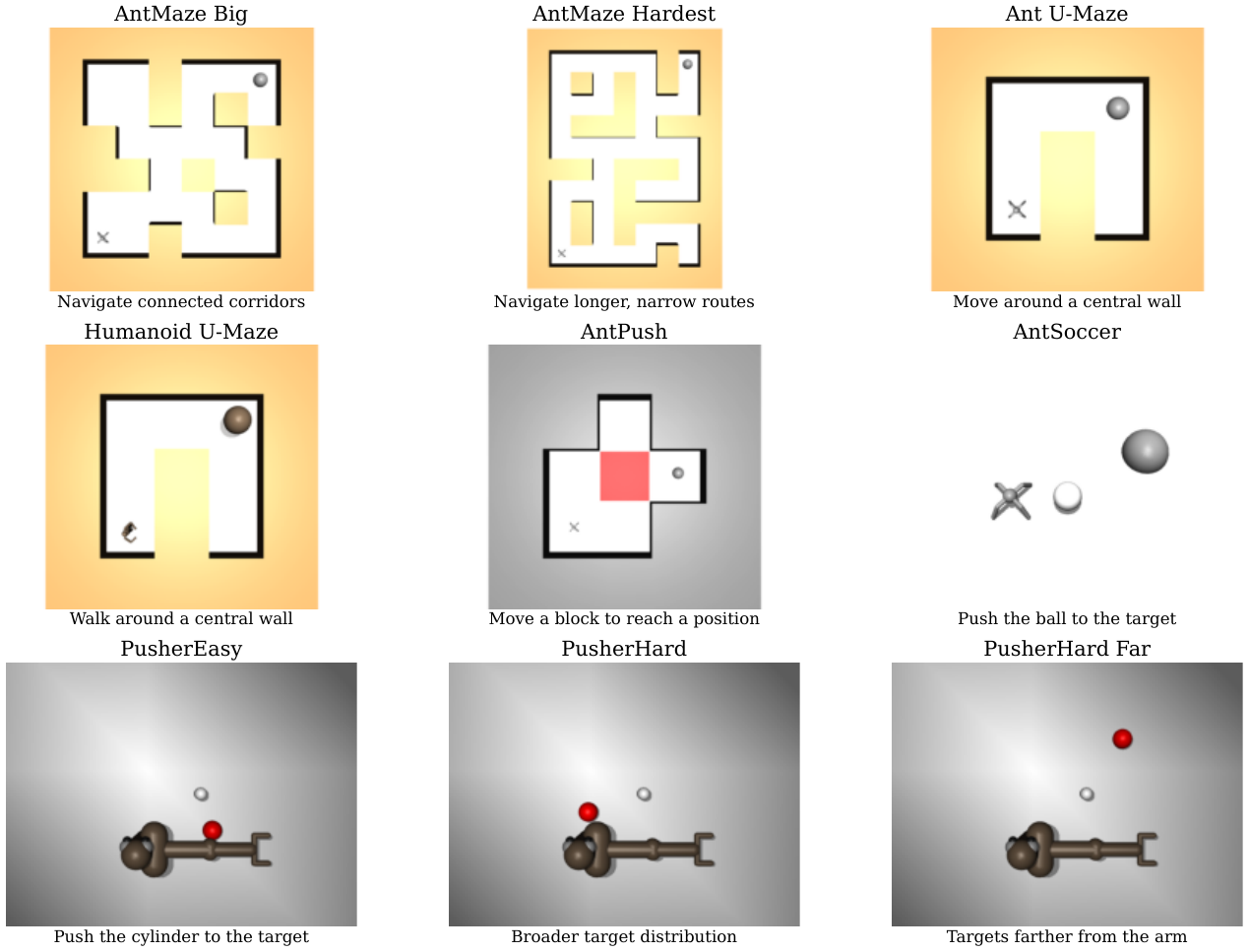}
\caption{The nine benchmark tasks, rendered from their simulation models. The three Pusher tasks share the same arm and cylinder; their target distributions differ. Eight tasks follow JaxGCRL, while PusherHard Far changes PusherHard's target distribution.}
\label{fig:environments}
\end{figure}
We evaluate DCP, CRL, and an ablation over CRL, which we term Self-Subgoal Conditioning (SSGC), on nine Brax-based navigation and manipulation tasks~\citep{freeman2021brax,bortkiewicz2025jaxgcrl}. SSGC uses the same waypoint selection as DCP and supplies raw waypoint coordinates to the actor. All methods share task-specific interaction budgets, critic objectives, and principal optimization settings. DCP's larger input increases actor parameters by 11--21\%, entirely in the first layer. Hidden widths, depths, action heads, and critic parameter counts are unchanged; the total actor--critic increase is 4.15--6.40\% (Appendix~\ref{app:parameter-counts}).

Each method is trained with five seeds. Episodes last 1,000 steps. AntPush, PusherHard, and Humanoid U-Maze use 100M training steps; the other tasks use 50M. We report \emph{success rate}, the percentage of evaluation episodes entering the goal region at least once, and \emph{time near goal}, the mean number of steps spent inside it. We use both because a policy can reach the goal and then move away from it. Final values refer to the last checkpoint. Figure~\ref{fig:curves} gives the learning curves; Appendix~\ref{app:tasks} specifies task settings and Appendix~\ref{app:curves} reports intermediate checkpoints.

\begin{figure}[t]
\centering
\includegraphics[width=\linewidth]{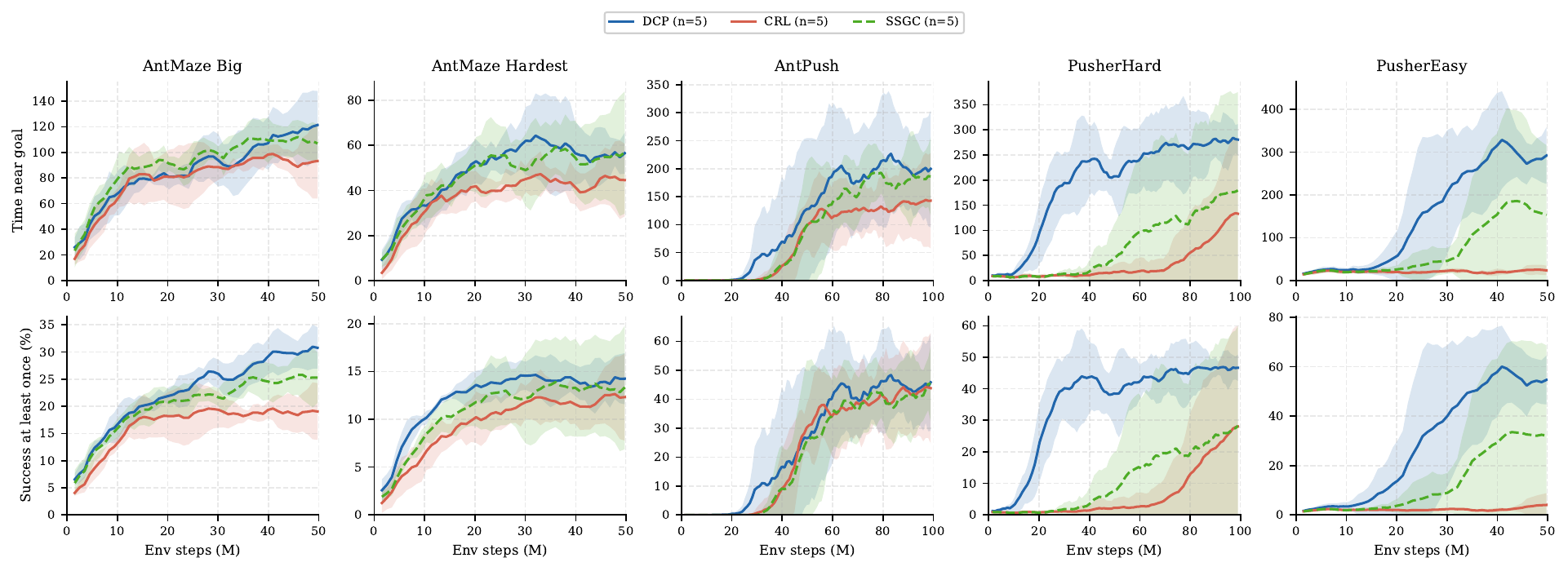}
\includegraphics[width=0.9\linewidth]{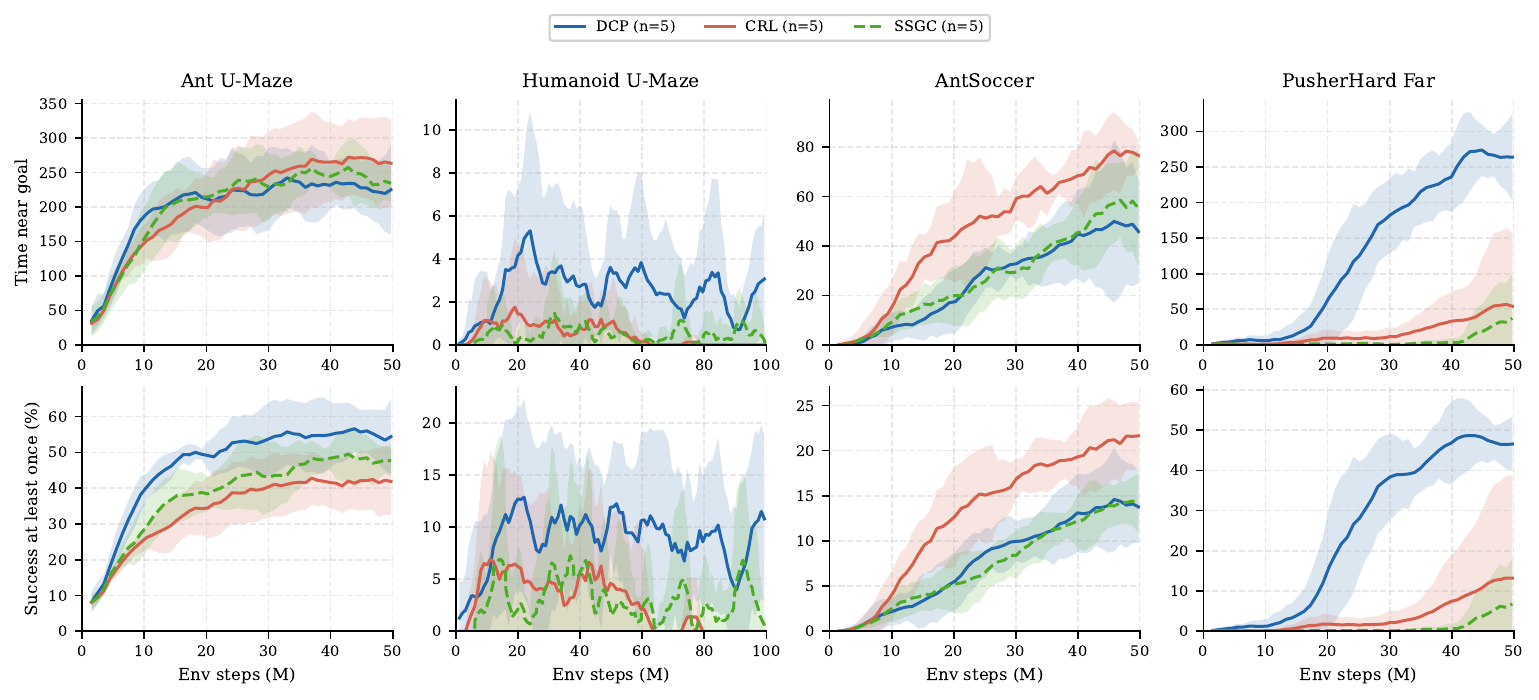}
\caption{Online learning across all nine tasks. Each task shows time near goal (upper panel) and success at least once (lower panel). Lines are means over five seeds; shading denotes 95\% Student-$t$ confidence intervals. Budgets are matched across methods within each task. The comparison includes AntSoccer, where CRL outperforms DCP, and Ant U-Maze, where the final two metrics favor different methods.}
\label{fig:curves}
\end{figure}

\begin{table}[t]
\centering\small
\caption{Final evaluation across nine tasks, five seeds per method. Entries are means with reported 95\% confidence-interval half-widths. Bold marks the largest mean in each task and metric. Time near goal is measured in steps out of 1,000.}
\label{tab:main}
\setlength{\tabcolsep}{3pt}
\begin{tabular}{lrrrrrr}
\toprule
& \multicolumn{3}{c}{Success rate (\%)} & \multicolumn{3}{c}{Time near goal} \\
\cmidrule(lr){2-4}\cmidrule(lr){5-7}
Task & CRL & SSGC & DCP & CRL & SSGC & DCP \\
\midrule
AntMaze Big & $18.3\!\pm\!6.5$ & $23.3\!\pm\!3.0$ & $\mathbf{29.9}\!\pm\!2.9$ & $89.2\!\pm\!28.0$ & $99.8\!\pm\!9.0$ & $\mathbf{115.3}\!\pm\!14.6$ \\
AntMaze Hardest & $12.2\!\pm\!5.6$ & $14.2\!\pm\!8.0$ & $\mathbf{16.1}\!\pm\!2.3$ & $48.1\!\pm\!23.7$ & $\mathbf{66.0}\!\pm\!41.4$ & $63.2\!\pm\!13.3$ \\
Ant U-Maze & $40.8\!\pm\!10.2$ & $49.8\!\pm\!4.2$ & $\mathbf{55.0}\!\pm\!13.3$ & $\mathbf{261.3}\!\pm\!77.2$ & $252.7\!\pm\!52.1$ & $238.1\!\pm\!76.2$ \\
Humanoid U-Maze & $0.0\!\pm\!0.0$ & $0.0\!\pm\!0.0$ & $\mathbf{8.4}\!\pm\!9.6$ & $0.0\!\pm\!0.0$ & $0.0\!\pm\!0.0$ & $\mathbf{3.6}\!\pm\!4.5$ \\
AntPush & $45.6\!\pm\!24.1$ & $\mathbf{51.4}\!\pm\!15.5$ & $42.7\!\pm\!12.6$ & $143.4\!\pm\!105.6$ & $\mathbf{199.1}\!\pm\!93.3$ & $181.7\!\pm\!95.6$ \\
PusherEasy & $4.4\!\pm\!4.1$ & $34.1\!\pm\!38.5$ & $\mathbf{55.4}\!\pm\!9.6$ & $26.2\!\pm\!8.4$ & $180.7\!\pm\!183.4$ & $\mathbf{297.8}\!\pm\!90.9$ \\
PusherHard & $28.8\!\pm\!32.4$ & $27.8\!\pm\!30.7$ & $\mathbf{47.7}\!\pm\!5.8$ & $125.7\!\pm\!136.7$ & $170.5\!\pm\!186.0$ & $\mathbf{280.9}\!\pm\!28.5$ \\
PusherHard Far & $12.5\!\pm\!24.0$ & $9.0\!\pm\!15.1$ & $\mathbf{47.4}\!\pm\!5.7$ & $48.4\!\pm\!80.4$ & $49.0\!\pm\!82.3$ & $\mathbf{260.7}\!\pm\!66.8$ \\
AntSoccer & $\mathbf{22.6}\!\pm\!3.4$ & $14.1\!\pm\!3.6$ & $12.9\!\pm\!4.3$ & $\mathbf{76.4}\!\pm\!7.7$ & $48.2\!\pm\!31.9$ & $42.4\!\pm\!22.1$ \\
\bottomrule
\end{tabular}
\end{table}

\textbf{DCP improves final mean success and time near goal over CRL on seven tasks each} (Table~\ref{tab:main}). The largest absolute success improvements occur on PusherEasy ($+51.0$ percentage points), PusherHard Far ($+34.9$), and PusherHard ($+18.9$). DCP also improves success on all four navigation tasks, although Humanoid U-Maze remains difficult and variable across seeds. Compared with SSGC, DCP has higher final mean success on seven tasks.

Looking at both metrics makes the task-level differences clearer. On Ant U-Maze, DCP reaches the goal in more episodes but spends fewer steps there than CRL. On AntPush, its final success is lower while its time near goal is higher; SSGC has the largest final means on both metrics. AntSoccer is a failure case for DCP relative to CRL on both measures.

\subsection{What do scoring and direction conditioning contribute?}
\label{sec:factorial}
DCP changes both the collection target and the actor input. To understand their individual effects, we compare all four combinations: CRL, direct-goal direction conditioning, SSGC, and DCP. The actor receives either raw coordinates or direction and distance, and collection uses either the final goal or a scored waypoint. We run each combination with five seeds. All variants train for 10M steps in an open room and a U-maze. Within each actor family, scoring comparisons match architecture and random-number handling; direct-goal DCP retains the scoring computation but ignores its selected target. Table~\ref{tab:factorial} reports mean evaluation success over the training run, summarizing learning over the full training budget.

\begin{table}[t]
\centering
\caption{Separating collection targets from actor inputs. Mean success over 20 evaluation checkpoints, averaged over five seeds; $\pm$ denotes a 95\% Student-$t$ confidence-interval half-width across seeds. Higher is better.}
\label{tab:factorial}
\begin{tabular}{llll}
\toprule
Method & Collection target & Open room & U-maze \\
\midrule
CRL & Final goal & $0.956\pm0.037$ & $0.846\pm0.153$ \\
Direct direction conditioning & Final goal & $0.974\pm0.010$ & $0.885\pm0.059$ \\
SSGC & Scored waypoint & $0.866\pm0.036$ & $0.851\pm0.040$ \\
DCP & Scored waypoint & $0.869\pm0.033$ & $0.854\pm0.066$ \\
\bottomrule
\end{tabular}
\end{table}

\textbf{Direction conditioning without waypoint selection has the largest mean in both settings.} Scoring reduces open-room performance in every seed for both actor inputs; adding scoring to the direction-conditioned policy changes mean success by $-0.105$, with paired 95\% interval $[-0.141,-0.068]$. In the U-maze, the corresponding difference is $-0.032$ with interval $[-0.125,0.061]$. Here, adding waypoint selection can slow learning. Actor parameters increase from 68,612 to 84,740 (23.5\%); the benchmark actors have larger state inputs and action heads, making the same added input weights a smaller fraction of their parameters. We next examine the learned geometry and the actor's use of direction.

\subsection{Does the representation capture obstacle-respecting routes?}
\label{sec:analysis}
Our motivation relies on the representation learning something useful about the routes an agent can take. We examine this directly in a planar point-mass environment where we can compute shortest paths around the walls. States include position and velocity, and actions are continuous forces. Appendix~\ref{app:geometry} describes all controlled environments and shows their layouts. A weighted occupancy graph gives exact shortest paths on a fixed spatial discretization, with diagonal edges allowed only when collision-free. We evaluate 100,128 directed state pairs and separately inspect 2,336 pairs where straight-to-goal motion conflicts with a graph-optimal first move. The graph provides a spatial reference for evaluating the learned representations.

In a five-seed representation study, DCP's goal-embedding distances correlate more strongly with U-maze graph distances than CRL's (Table~\ref{tab:geometry}). Figure~\ref{fig:maze-fields} compares the learned fields with graph distance for one seed; Appendix~\ref{app:visualizations} shows the individual-seed results and AntMaze examples. On the pairs where moving directly toward the goal conflicts with the shortest route, the negative spatial gradient of the learned distance points toward a graph-optimal first move on 70.9\% of pairs for DCP and 64.7\% for CRL, averaged across seeds. DCP's open-room graph and Euclidean correlations differ by less than 0.001 in mean, while its U-maze graph correlation exceeds its Euclidean correlation by 0.065. These measurements show that the representation captures the obstacle's effect on route length.
Each method learns its representation from its own trajectories. We also find that better distance ordering alone is insufficient to explain control performance: mean U-maze success over training is similar for DCP and CRL (0.847 and 0.846), and CRL performs better in the open room. We therefore next examine the actor's response to the learned direction.

\begin{figure}[t]
\centering
\includegraphics[width=\linewidth]{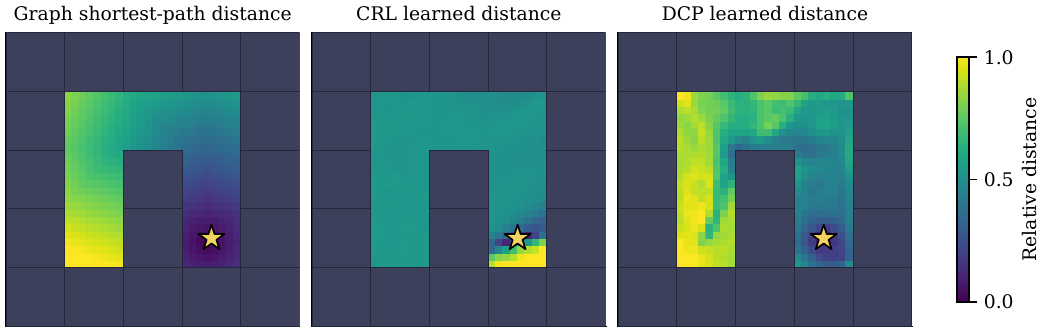}
\caption{Distance to the starred goal in the controlled U-maze: graph shortest-path distance (left), and learned representation distances for CRL and DCP after 10M steps (middle, right; seed 42). Each panel is scaled by its own 98th-percentile distance. The color scale shows relative distance within each panel.}
\label{fig:maze-fields}
\end{figure}

\begin{table}[t]
\centering
\caption{Representation geometry after 10M online steps. Means $\pm$ sample standard deviations over five independently trained encoders. Distances are computed from the goal encoder; local descent follows the negative spatial gradient of the learned distance.}
\label{tab:geometry}
\begin{tabular}{lrr}
\toprule
U-maze measure & CRL & DCP \\
\midrule
Spearman correlation with graph distance & $0.471\pm0.031$ & $0.756\pm0.041$ \\
Spearman correlation with Euclidean distance & $0.456\pm0.033$ & $0.690\pm0.034$ \\
Graph-correct local descent (\%) & $64.7\pm10.1$ & $70.9\pm4.5$ \\
\bottomrule
\end{tabular}
\end{table}

\subsection{Does the actor use the supplied direction?}
\label{sec:interventions}
If the actor uses the supplied direction to choose its actions, changing that direction should change how it reaches the goal. We test this with a fixed trained policy, keeping the encoder, distance input, initial conditions, and evaluation procedure unchanged. We first use a differential-drive U-maze with persistent velocity, forward thrust, and yaw torque; the robot cannot command arbitrary lateral force. Across three policies, 42 directed start--goal pairs, and four initial headings, the learned direction yields 45.8\% success. A reversed direction reduces success to 0.2\%, and wrong-goal, orthogonal, or zero directions also substantially reduce success (Figure~\ref{fig:interventions}). \textbf{These interventions show that the learned policies use the supplied direction to reach the goal.}

\begin{figure}[t]
\centering
\includegraphics[width=\linewidth]{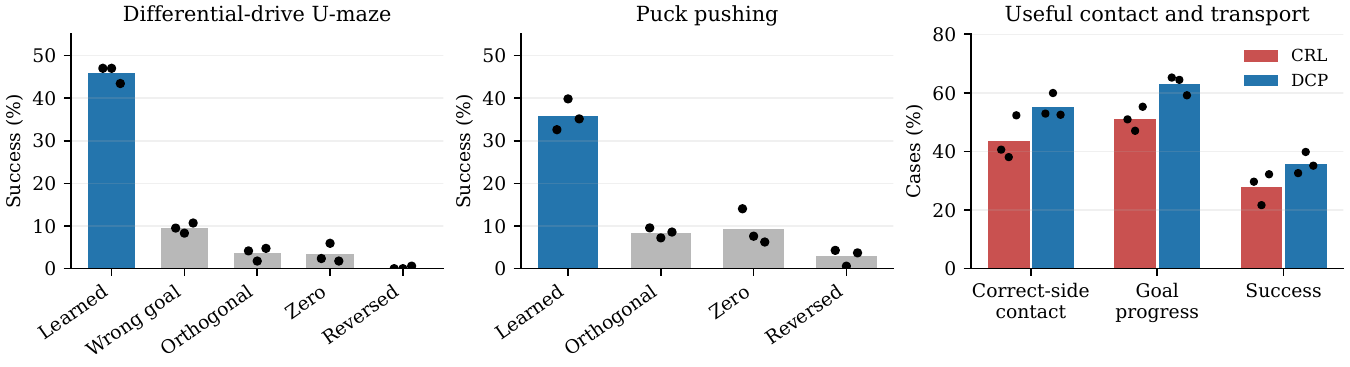}
\caption{Changing only the direction supplied to a trained actor reduces success in navigation and puck pushing (left, middle). In matched puck-pushing cases, DCP also makes more useful contacts and goal progress than CRL (right). Bars show means and dots show the three training seeds. The puck task uses 128 quartets per seed, varying the robot's side of approach while holding puck and goal fixed within each quartet.}
\label{fig:interventions}
\end{figure}

Manipulation adds another difficulty: the robot has to make the right contact before it can move the object toward the goal. We test this in a puck-pushing task, where the goal encoder receives puck coordinates and the actor receives the full robot and puck state. DCP succeeds on 35.9\% of matched cases versus 27.9\% for CRL, and makes contact from the correct side on 55.1\% versus 43.7\%. Reversing its direction reduces success to 2.9\%, even though contact remains frequent. Direction therefore helps the actor turn contact into goal-directed object motion. The full-state input supplies the robot--puck configuration needed to choose an approach.

\subsection{When does a waypoint preserve the final-goal action?}
\label{sec:substitution}
The policy learns from achieved targets during training and receives the final goal at deployment. For this change to work, a useful intermediate target should lead to an action consistent with the final goal. We test this on three trained point-mass DCP policies, evaluating 20,000 start--goal pairs per policy with two alternative targets: an exact midpoint on a graph-shortest path, and an off-path control matched in distance from the source. Holding the actor and physical state fixed, we compare each target's action with the final-goal action. On-path substitution yields mean action cosine 0.789, versus $-0.072$ for the off-path control; positive alignment with a graph-optimal first move occurs on 95.2\% versus 41.9\% of cases. The on-path target performs better on these measures across all seeds. Here, the graph provides the midpoint, and substituting it changes both the direction and the distance. We find that the learned policy largely preserves its action when the waypoint lies on the route to the goal. The next question is whether our scorer actually finds such waypoints. Appendix~\ref{app:geometry} relates latent triangle slack to direction agreement and includes the radius difference in the actor-output bound.

\subsection{How well does the scorer select useful waypoints?}
\label{sec:limits}
A useful waypoint has to be present among the candidates and then chosen by the scorer. We examine both steps using five trained U-maze encoders and 20,000 matched start--goal pairs and candidate sets per encoder, sampled from the occupancy graph. An exact forward on-path candidate is available in 77.7\% of draws. Uniform selection chooses one in 8.3\%, while the inner-product scorer selects one in 12.9--18.8\%. The scorer also reduces median normalized path slack from 0.761 to 0.197--0.441 and changes average normalized progress from negative to positive. We therefore see a clear improvement over random selection, along with a substantial gap between having a useful candidate and choosing it. Because candidates are sampled from the graph, this experiment tests the scoring rule separately from the states collected during training.

A separate three-seed nested-candidate test increases availability from 77.9\% with 32 draws to 94.3\% with 128 draws, while useful selection rises by only 2.4--3.9 percentage points. This gap identifies ranking as a limitation even when useful candidates are available. The pool introduces a further constraint. If a fraction $p$ of its states is useful, $N$ independent draws contain at least one useful state with probability $1-(1-p)^N$. When $p=0$, increasing the number of draws cannot help. Alternative ranking rules improve the fixed-encoder test (Appendix~\ref{app:geometry}), although their effect on online learning remains to be established.

\subsection{Can a learned candidate source improve exploration?}
The finite pool limits selection to states the agent has already visited. To address this, we replace pool samples with candidates from an online four-component mixture model trained on trajectory midpoints \citep{jurgenson2020sub}. We keep the actor, critic, scorer, candidate count, and 10M-step budget fixed. The generator conditions on the current achieved-goal coordinates and the final goal. We also test whether restricting generated proposals to observed states helps: a third variant projects each proposal to its nearest stored state before scoring. The nine-task comparison uses the default visited-state pool.

\begin{table}[t]
\centering
\caption{Changing the candidate source in two controlled mazes. Mean success over training, averaged across five seeds. $\pm$ denotes sample standard deviation across seeds. Projection maps each proposal to the nearest stored state.}
\label{tab:generator}
\begin{tabular}{lrrr}
\toprule
Task & Visited pool & Generator & Generator + projection \\
\midrule
Two-route maze & $0.457\pm0.301$ & $0.613\pm0.202$ & $0.738\pm0.128$ \\
Rare-gateway maze & $0.271\pm0.150$ & $0.520\pm0.113$ & $0.204\pm0.291$ \\
\bottomrule
\end{tabular}
\end{table}

The generator increases mean success over training in both tasks (Table~\ref{tab:generator}), exceeding the pool in four of five paired seeds in the two-route maze and all five in the rare-gateway maze. Projection improves the mean in the two-route setting but removes the benefit at the rare gateway, where it can move proposals to the wrong side of the passage. The generator extends candidate coverage beyond the visited-state pool. Appendix~\ref{app:generator} describes its training and the projection step.

\subsection{Does seeding the pool with successful states help?}
A more direct way to improve candidate availability is to add states from successful trajectories. We use this intervention to test whether supplying useful states is enough to improve control, while keeping replay unchanged. Each task uses three target seeds and matched libraries of successful-path states and states preceding meaningful object motion. On PusherHard, reserving 128 of the 512 slots for successful-path states increases mean success over training from 32.39\% with the matched pre-motion library to 38.17\%, winning all three paired seeds. However, adding 128 states while preserving all 512 online slots and their expected exposure gives no consistent benefit: successful-path seeding scores 36.26\%, compared with 38.09\% for the matched control. On AntSoccer, the scorer selects successful-path states on 30.91\% of refreshes, versus 4.59\% for pre-motion states, and waypoint-to-goal direction agreement improves in every seed. \textbf{Ball motion and success do not improve.} Analysis of completed training runs also finds sparse ball motion and little change from the initial representation's distance ordering (Appendix~\ref{app:diagnostics}). In this case, the scorer recognizes the added states, but the policy still fails to turn them into useful ball motion.

%% file: sections/5_conclusion.tex
\section{Conclusion}
\label{sec:conclusion}
We asked whether a CRL policy could make better use of the geometry learned by its critic. DCP supplies the actor with the direction and distance to a target in representation space, using visited waypoints during collection and the final goal at deployment. Across nine tasks, DCP improves final mean success and time near goal over CRL on seven tasks each.

Our analyses show that the representation captures route structure and that the actor uses the supplied direction in navigation and manipulation. They also clarify where the difficulty lies. Waypoint selection can slow learning when direct goal conditioning already works, and better candidates are useful only if the learned policy can act on them. Learned generation improves exploration in two controlled mazes; adding successful states to the pool leaves AntSoccer unresolved. Improving the connection between candidate selection and the behavior the policy can execute is therefore an important next step.

Our experiments use simulated state observations. Actor sizes and collected trajectories differ across methods, motivating parameter-matched and shared-data comparisons. Further work could extend DCP to visual observations and combine it with deeper or action-chunked CRL.

%% file: statements.tex
\section*{Reproducibility statement}
The DCP implementation and reproduction scripts are available at \url{https://anonymous.4open.science/r/dcp-supplement-anon-CF41/}.
Appendix~\ref{app:implementation} specifies the objectives, policy inputs, and collection procedure. Appendix~\ref{app:tasks} gives task definitions, budgets, and experimental settings. The final task comparisons use five seeds; additional experiments state their seed counts separately. Figure~\ref{fig:curves} and Appendix~\ref{app:curves} provide learning curves and checkpoint values. Appendix~\ref{app:geometry} gives the proof relating latent triangle slack to direction agreement.

\section*{Ethics statement}
This work studies goal-conditioned learning in simulated environments with state observations. No human-subject data are used. Physical deployment requires safety constraints and hardware validation, particularly for contact-rich tasks.

\section*{AI use statement}
Generative AI tools were used to assist with research code, analysis of experimental results, literature retrieval, and drafting and revising the manuscript. They were also used in discussing mathematical claims and proofs. The research ideas originated with the authors. The authors are responsible for the accuracy of the final text, mathematical statements, citations, code, and reported results.

%% file: appendix.tex
\section{Implementation and Objectives}
\label{app:implementation}
\subsection{Default algorithm}
Algorithm~\ref{alg:dcp} gives the DCP procedure used in the nine-task comparison. We use the visited-state pool and inner-product scorer throughout that comparison; the generator, demonstration-state libraries, and alternative scoring rules are studied in separate experiments. The policy outputs a tanh-squashed Gaussian distribution; evaluation uses the deterministic policy. The actor input concatenates the full state, a 64-dimensional direction, and one radius.

\begin{algorithm}[h]
\caption{Direction-Conditioned Policies}
\label{alg:dcp}
\begin{algorithmic}[1]
\STATE Initialize actor $\pi_\eta$, encoders $(\phi_\theta,\psi_\theta)$, replay, and state pool.
\WHILE{training budget remains}
  \STATE Observe full state $s$ and requested final goal $g$.
  \IF{waypoint refresh is due}
    \STATE Sample 32 candidate indices with replacement from the 512-state pool.
    \STATE Select $z^*$ maximizing $\psi_\theta(q(z))^\top\psi_\theta(g)$.
    \STATE Set collection target $y\leftarrow q(z^*)$.
  \ENDIF
  \STATE Recompute $(d,r)$ from $(s,y)$ using Equation~\ref{eq:interface}.
  \STATE Sample and execute $a\sim\pi_\eta(\cdot\mid s,d,r)$; store the transition.
  \STATE Refresh the pool from the collected trajectory batch at the collection boundary.
  \STATE Sample replay sequences and relabel transitions with future achieved goals.
  \STATE Update encoders with backward InfoNCE and the baseline regularizer.
  \STATE Update actor and entropy coefficient, holding encoder parameters fixed.
\ENDWHILE
\STATE At deployment: use $y=g$ at each step, without the pool or scorer.
\end{algorithmic}
\end{algorithm}

\paragraph{Numerical details.}
The implemented energy is $-\sqrt{\norm{\phi(s,a)-\psi(g)}_2^2+10^{-6}}$. The goal displacement is divided by $r+10^{-8}$; a zero displacement therefore produces a zero direction. In addition to Equation~\ref{eq:infonce}, the critic uses
\begin{equation}
 \mathcal L_{\mathrm{reg}}=
 0.1\frac{1}{B}\sum_i\left[\log\sum_j\exp(\ell_{ij}+10^{-6})\right]^2.
\end{equation}
We differentiate the actor loss through the critic's action input while holding both encoders fixed. The temperature update uses the baseline entropy target.

\paragraph{Future-goal relabeling.}
Within a sampled sequence, a future index $j>i$ receives weight proportional to $\gamma^{j-i}$, restricted to the same trajectory identifier. A small diagonal weight handles positions with no eligible future sample. The selected achieved-goal coordinates provide the critic positive and actor target.

\paragraph{Pool maintenance.}
The default pool is refreshed from evenly spaced indices of the flattened latest collection batch. Its initial entries are zero-initialized before collected observations replace them. Candidate indices are shared across parallel environments, but candidates are ranked separately for each requested goal. The waypoint is refreshed every 25 steps according to the implementation's environment-step schedule. Encoder evaluations of the current achieved goal and selected target produce an updated direction on every action.

\subsection{Training settings}
Table~\ref{tab:hyperparameters} lists the shared optimization settings and DCP's waypoint settings. Humanoid U-Maze uses fewer parallel environments and a smaller minibatch; comparisons within that task retain the same settings across methods.
\begin{table}[h]
\centering
\caption{Main settings. Humanoid U-Maze uses the stated smaller parallelism and batch size. Replay sizes retain the implementation's buffer configuration units.}
\label{tab:hyperparameters}
\begin{tabular}{lr}
\toprule
Setting & Value \\
\midrule
Actor / critic hidden width & 256 \\
Hidden layers & 2 \\
Representation dimension & 64 \\
Actor, critic, and temperature learning rates & $3\times10^{-4}$ \\
Discount & 0.99 \\
Minibatch size & 256 (Humanoid: 128) \\
Parallel environments & 512 (Humanoid: 128) \\
Episode length & 1,000 \\
Collection unroll length & 62 \\
Minimum / maximum replay size & 1,000 / 10,000 \\
Contrastive loss / energy & Backward InfoNCE / negative L2 norm \\
Log-sum-exp penalty coefficient & 0.1 \\
Pool size / candidates / refresh steps & 512 / 32 / 25 \\
\bottomrule
\end{tabular}
\end{table}

We train DCP, CRL, and SSGC with the same within-task simulation settings and backward InfoNCE configuration. All baseline results come from these matched runs.

\subsection{Parameter counts}
\label{app:parameter-counts}
We count saved actor and critic parameters for all five seeds of each benchmark task. Counts agree across seeds. DCP replaces the two or three raw goal coordinates with 64 direction components and one distance. With first-layer width 256, this adds $256(65-\dim(g))$ weights: 16,128 for two-dimensional goals and 15,872 for three-dimensional goals. All remaining actor layers and both critic encoders have identical parameter counts. The totals in Table~\ref{tab:parameter-counts} include the actor and both encoders; the learned entropy coefficient adds one scalar to each method.

\begin{table}[ht]
\centering\small
\caption{Saved parameter counts for the nine benchmark tasks. Grouped tasks have identical architectures and counts. Percentage increases are relative to CRL.}
\label{tab:parameter-counts}
\begin{tabular}{lrrrr}
\toprule
Tasks & CRL actor & DCP actor & Actor increase & Total increase \\
\midrule
AntMaze Big / Hardest / U-Maze & 78,096 & 94,224 & 20.65\% & 6.37\% \\
Humanoid U-Maze & 144,162 & 160,034 & 11.01\% & 4.15\% \\
AntPush / AntSoccer & 78,608 & 94,736 & 20.52\% & 6.35\% \\
PusherEasy / Hard / Hard Far & 75,534 & 91,406 & 21.01\% & 6.40\% \\
\bottomrule
\end{tabular}
\end{table}

The controlled point-mass actor has a smaller physical-state input and fewer action outputs, so the added input weights account for a larger relative increase (23.5\%). Across the benchmark, that increase is 11--21\% for the actor and 4.15--6.40\% for the complete actor--critic. The capacity difference is thus confined to the input layer. A parameter-matched raw-goal actor would provide a further control for its contribution to performance.

\paragraph{Computational cost.}
We measure training throughput over three seeds using the median steps per second in the second half of each run. This reduces the effect of compilation and logging on the comparison. DCP runs at 95.3\% of CRL's throughput at the median across tasks, with task ratios ranging from 92.4\% to 97.6\%. A 512-entry float32 state pool occupies 0.523 MiB at the largest state size (268). The pool and scoring are removed at deployment; goal-encoder evaluation is still required.

\section{Tasks and Evaluation Protocols}
\label{app:tasks}
Table~\ref{tab:tasks} specifies each task's goal coordinates, success radius, and training budget. These determine what counts as reaching the goal: AntPush measures the robot's position, whereas AntSoccer measures the ball's position. All three methods receive the same interaction budget for a given task.
\begin{table}[h]
\centering
\caption{The nine evaluation tasks. Training steps are matched across methods within each task.}
\label{tab:tasks}
\begin{tabular}{llll}
\toprule
Task & Goal coordinates & Radius & Training steps \\
\midrule
AntMaze Big & Ant position & 0.5 & 50M \\
AntMaze Hardest & Ant position & 0.5 & 50M \\
Ant U-Maze & Ant position & 0.5 & 50M \\
Humanoid U-Maze & Humanoid position & 0.5 & 100M \\
AntPush & Ant position (movable obstacle) & 0.5 & 100M \\
AntSoccer & Ball XY & 0.5 & 50M \\
PusherEasy & Cylinder position & 0.1 & 50M \\
PusherHard & Cylinder position & 0.1 & 100M \\
PusherHard Far & Cylinder position & 0.1 & 50M \\
\bottomrule
\end{tabular}
\end{table}

\paragraph{Navigation.}
AntMaze Big and AntMaze Hardest place an eight-actuator quadruped in different corridor layouts. Ant U-Maze uses the same robot in a U-shaped corridor, where reaching the opposite arm requires moving around the central wall. Humanoid U-Maze uses a 17-actuator humanoid in that layout. Actions control the robot's joints. The actor observes the physical state, including positions and velocities. Ant navigation uses horizontal root position as the achieved goal; Humanoid uses three-dimensional root position, with target height fixed by the environment. Success requires a goal distance below 0.5.

\paragraph{Object interaction.}
AntPush requires the Ant to reach a target position in a passage obstructed by a movable block. Success is measured from the Ant's position. AntSoccer uses the same type of quadruped to move a ball; the achieved goal is the ball's horizontal position, and success requires ball-to-target distance below 0.5. AntSoccer is called \texttt{ant\_ball} in the environment implementation; these are two names for the same task.

\paragraph{Robot-arm manipulation.}
The Pusher tasks use a seven-joint arm to push a cylinder. The achieved goal is the cylinder's three-dimensional position, and success requires distance below 0.1. The variants differ only in their target distribution. Let $(u,v)$ denote the two planar goal-joint offsets, in their stored order. PusherEasy samples $(u,v)$ uniformly from $[-0.55,-0.25]\times[-0.2,0.2]$; PusherHard uses $[-0.65,0.35]\times[-0.55,0.45]$; PusherHard Far uses $[0,0.35]\times[-0.55,0.45]$. The first joint translates along world $y$ and the second along world $x$, relative to the goal body's reference position. All variants initially sample cylinder offsets from $[-0.3,0]\times[-0.2,0.2]$, with the same adjustment when the cylinder and goal are too close. Figure~\ref{fig:pusher-views} provides additional views of target placement.

For an episode of length $T=1000$, let $I_t$ indicate membership in the goal region. Success is $\mathbf{1}\{\sum_t I_t>0\}$, averaged over episodes and expressed as a percentage. Time near goal is $\sum_t I_t$, averaged over episodes. Where we report mean success over training in controlled studies, the statistic is the arithmetic mean of the logged evaluation success rates across the fixed evaluation schedule, first computed per seed. Comparisons within each study use the same schedule.

\begin{table}[h]
\centering\small
\caption{Number of seeds and evaluations in each experiment. The geometry experiment and four-way comparison use separately trained policies.}
\label{tab:study-protocols}
\begin{tabular}{p{0.42\linewidth}p{0.48\linewidth}}
\toprule
Experiment & Training and evaluation \\
\midrule
Nine-task comparison & Five seeds per method and task; 50M or 100M steps. \\
Four-way collection/input comparison & Five seeds, 10M steps, 20 evaluations, separately in each layout. \\
Representation geometry & Five encoders per method and layout, trained for 10M steps. \\
Candidate-source comparison & Five seeds per candidate source and maze, 10M steps. \\
Waypoint substitution & Three policies; 20,000 paired probes per policy. \\
Differential-drive intervention & Three policies; 42 directed pairs and four headings per policy. \\
Puck-pushing intervention & Three policies; 128 matched quartets per policy. \\
Demonstration-state seeding & Three target seeds; successful states added to the candidate pool. \\
Frozen-CRL transfer & One behavioral seed; 50M source pretraining plus 25M adaptation. \\
\bottomrule
\end{tabular}
\end{table}

Table~\ref{tab:study-protocols} distinguishes the main five-seed comparison from the smaller intervention and transfer studies. Each seed corresponds to a separately trained policy or encoder; multiple probes of one policy share that training run. The five-seed studies use seeds 1, 42, 123, 256, and 1234; three-seed interventions use 1, 42, and 123. Uncertainty is computed across independent training seeds. The benchmark and four-way comparison use Student-$t$ intervals with four degrees of freedom: $\bar x\pm2.77645\,s/\sqrt{5}$, where $s$ is sample standard deviation across seeds. Geometry and candidate-source tables instead report sample standard deviations.

\section{Controlled Geometry and Selection Tests}
\label{app:geometry}
\subsection{Controlled environments}
Figure~\ref{fig:controlled-environments} shows the six controlled environments. These supplement the nine benchmark tasks; their results are reported separately. All use 1,000-step episodes. The point-mass maze studies train for 10M interactions and evaluate at 20 checkpoints. We use the same dynamics and training settings across methods within each layout.

\begin{figure}[t]
\centering
\includegraphics[width=\linewidth]{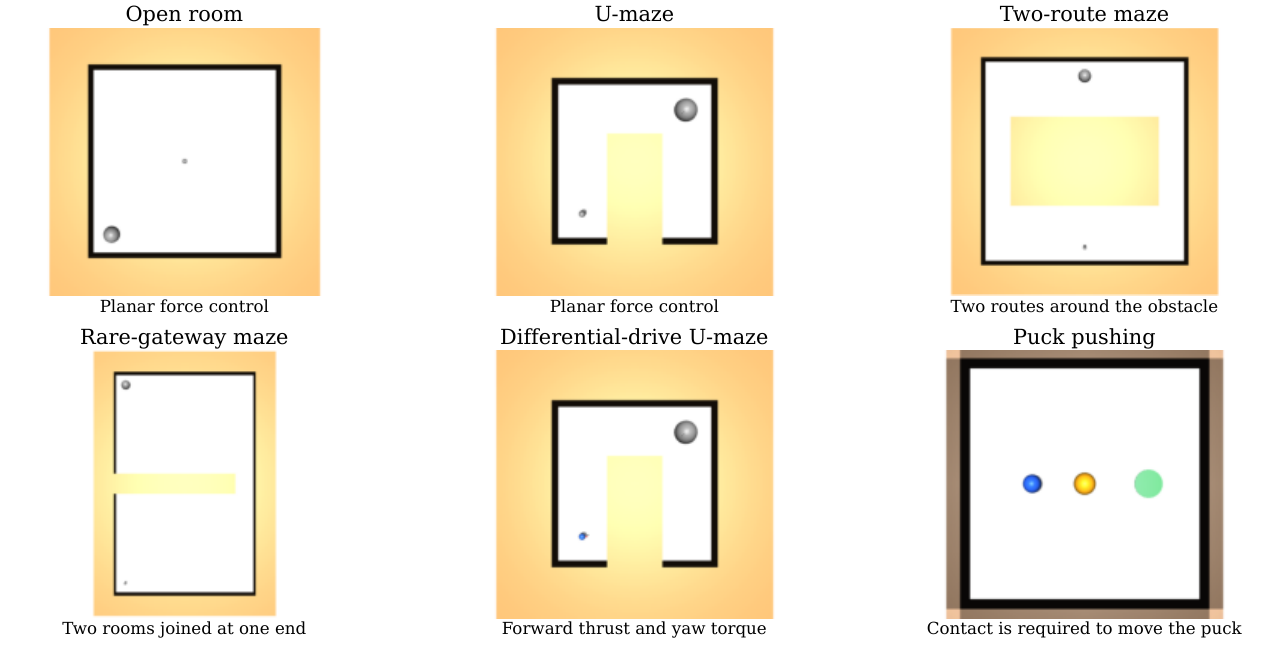}
\caption{The controlled environments, rendered from their simulation models with floor textures hidden. The four point-mass environments differ in wall layout and start/goal distributions. The differential-drive robot adds heading-dependent motion. In puck pushing, the blue agent must contact the yellow puck and move it to the green target. Images show illustrative task configurations.}
\label{fig:controlled-environments}
\end{figure}

\paragraph{Point-mass mazes.}
The state is $(x,y,v_x,v_y)$, the action is a two-dimensional force, and the goal is a position. Forces change velocity, which persists between steps. Wall cells have side length 4, and success requires position error below 0.5. The open room has boundary walls only and starts near its center. The U-maze starts in one arm, with goals sampled from allowed free cells. The two-route maze places start and goal on opposite sides of a central block, allowing a route around either end. The rare-gateway maze has two rooms separated by a wall, with a single passage near one end; the start and goal lie across the divider, away from that passage. The generator experiments use the latter two layouts.

\paragraph{Differential-drive U-maze.}
The physical state is $(x,y,\theta,v_x,v_y,\dot\theta)$; the two actions apply forward thrust along the current heading and yaw torque. The robot cannot directly command sideways force, and velocity persists. The goal is horizontal position, with success radius 0.5. For the direction test, each of three learned policies is evaluated on 42 directed start--goal pairs and four initial headings per pair, holding these cases fixed across interventions.

\paragraph{Puck pushing.}
A planar force-controlled robot pushes a separate, unactuated puck. The actor observes both positions and velocities, an eight-dimensional physical state; the goal encoder receives only puck position. Success requires puck-to-target distance below 0.55. Contact transfers the robot's motion to the puck. The matched evaluation uses 128 groups of four cases per seed. Within a group, the puck and target are fixed while the robot approaches from four different sides. Groups vary target direction and distance, absolute position, and initial robot--puck separation. This tests whether the actor combines its direction input with the physical arrangement needed for useful contact.

\subsection{Graph reference and interpretation}
The point-mass state is $(x,y,v_x,v_y)$ and the action is a two-dimensional force. The U-maze is represented by a weighted occupancy graph with cardinal and collision-free diagonal connections. For the global-distance comparison, the graph supplies 100,128 directed pairs of distinct positions. Graph distances measure spatial route lengths. Local-direction probes compare the negative spatial gradient of $\norm{\psi(q(s))-\psi(g)}$ with graph-optimal outgoing moves. The actor input is defined in Equation~\ref{eq:interface}.

In the open room, graph distance and Euclidean distance largely agree. The layouts have different initial-state and goal distributions; comparisons of scoring rules and actor inputs are performed within each layout.

\subsection{Comparing waypoint selection rules}
We want to know whether the scoring rule chooses useful candidates when they are available. We therefore sample start, goal, and candidate positions from the occupancy graph, hold the encoders fixed, and give each rule the same 20,000 draws of 32 candidates.

With graph distance $D$, a candidate is counted as forward on-path when
\begin{equation}
 D(s,z)+D(z,g)-D(s,g)=0,\qquad D(z,g)<D(s,g),
\end{equation}
up to the numerical tolerance. Relative slack divides the left-hand slack by $D(s,g)$; relative progress is $[D(s,g)-D(z,g)]/D(s,g)$. Both require distinct source and goal. The default graph uses eight subdivisions and 448 nodes. Additional resolutions test sensitivity to discretization.

We compare two alternative scoring rules on the same candidate sets without retraining the encoders. One maximizes the cosine between $\psi(z)-\psi(s)$ and $\psi(g)-\psi(s)$. The other minimizes $\norm{\psi(z)-\psi(s)}+\norm{\psi(g)-\psi(z)}$. Across five trained encoders, the default inner-product scorer selects a forward on-path candidate in 12.9--18.8\% of draws; cosine ranking achieves 24.7--27.8\%, and minimizing the sum of learned distances achieves 28.6--33.0\%. Uniform selection achieves 8.3\%. Both alternatives improve waypoint selection in this test. Whether they improve online learning remains untested: changing the selection rule would also change the experience collected and the representation learned.

\subsection{A geometric statement about the direction input}
When a waypoint and goal lie in similar directions from the current embedding, they provide similar direction inputs to the actor. We can quantify this relation using the slack in the triangle formed by their embeddings. We use exact normalization and nonzero displacements for the statement below. Let $x=\psi(q(s))$, $w=\psi(q(z))$, and $v=\psi(g)$. Define
\begin{equation}
 a=\norm{w-x},\quad b=\norm{v-w},\quad c=\norm{v-x},\quad
 h=a+b-c,\quad u_z=\frac{w-x}{a},\quad u_g=\frac{v-x}{c}.
\end{equation}
\begin{proposition}[Latent slack and direction agreement]
For $a,c>0$, the triangle slack $h\geq0$ satisfies
\begin{equation}
 \norm{u_z-u_g}^2=\frac{h(b+c-a)}{ac}\leq\frac{2h}{a}.
 \label{eq:slack}
\end{equation}
\end{proposition}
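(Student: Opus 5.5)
The plan is to compute $\norm{u_z-u_g}^2$ exactly in terms of the three side lengths, then factor the result using the definition of $h$. Since $u_z$ and $u_g$ are unit vectors, $\norm{u_z-u_g}^2=2-2u_z^\top u_g$. The inner product follows from the law of cosines, or polarization, in the triangle with vertices $x,w,v$. Because $v-w=(v-x)-(w-x)$, we have
\begin{equation*}
 b^2=a^2+c^2-2(w-x)^\top(v-x),
\end{equation*}
so $u_z^\top u_g=(a^2+c^2-b^2)/(2ac)$. Substituting this in gives
\begin{equation*}
 \norm{u_z-u_g}^2=\frac{2ac-a^2-c^2+b^2}{ac}=\frac{b^2-(a-c)^2}{ac}=\frac{(b-a+c)(b+a-c)}{ac}.
\end{equation*}
The second factor is exactly $h=a+b-c$, and the first is $b+c-a$. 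This yields the claimed identity. The only hypotheses used are $a,c>0$, which make $u_z$ and $u_g$ well defined; $b=0$ is allowed.

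For the inequality, I would bound the factor $b+c-a$ by $2c$. By the triangle inequality, $b\le a+c$, so $b+c-a\le 2c$. Then $h(b+c-a)/(ac)\le 2hc/(ac)=2h/a$. This step uses $h\ge0$, which is the triangle inequality $c\le a+b$ and is also the reason the slack is nonnegative. If $h=0$ the bound is trivially tight. I would also note that $b+c-a\ge0$, again by the triangle inequality, so the right-hand side is a product of nonnegative quantities, consistent with a squared norm.

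No step is a serious obstacle. The point needing care is the factorization $b^2-(a-c)^2=(b-a+c)(b+a-c)$ and checking that the factor paired with $h$ is the one bounded by $2c$; the bound must use $b\le a+c$, not the reverse inequality. A symmetric bound $2h/c$ would follow similarly using $b+c-a\le 2b$? No—that requires different bookkeeping, so I would state only the $2h/a$ form the proposition asks for. The interpretation is that small slack relative to the waypoint distance $a$ forces direction agreement.
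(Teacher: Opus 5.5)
Your proof is correct and follows the paper's route. The cosine law gives $ac\norm{u_z-u_g}^2=b^2-(a-c)^2=h(b+c-a)$, and the triangle inequality $b\le a+c$ then gives $b+c-a\le 2c$.
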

\begin{proof}
The cosine law gives $b^2=a^2+c^2-2ac\langle u_z,u_g\rangle$. Consequently,
\begin{align}
 ac\norm{u_z-u_g}^2
 &=b^2-(a-c)^2\\
 &=(a+b-c)(b+c-a)=h(b+c-a).
\end{align}
The triangle inequality $b\leq a+c$ implies $b+c-a\leq2c$, proving the bound.
\end{proof}

This identity describes the geometry of the Euclidean embedding. If a deterministic actor is separately Lipschitz in direction and radius, with constants $L_d,L_r$ at a fixed state, then
\begin{equation}
 \norm{\pi(s,u_z,a)-\pi(s,u_g,c)}
 \leq L_d\sqrt{2h/a}+L_r|a-c|.
 \label{eq:full-input}
\end{equation}
The distance input matters here. Even when $h=0$, the waypoint and goal can be at different distances from the current embedding. For example, $x=0$, $w=1$, and $v=2$ give identical directions and different radii. Equation~\ref{eq:full-input} accounts for both changes.

The distance $\norm{\psi(q(s))-\psi(g)}$ is symmetric in its two goal-coordinate arguments and may vanish between distinct coordinates. It is therefore a pseudometric on those coordinates. Relating it to optimal-control cost requires additional assumptions, as in metric constructions for contrastive successor features and quasimetric value learning~\citep{myers2024distances,wang2023qrl}.

\section{Candidate Generation and Demonstration Seeding}
\label{app:generator}
\subsection{Online candidate generator}
The generator is a conditional mixture-density network with four diagonal-Gaussian components and two hidden layers of width 128. Its inputs are the current achieved-goal coordinates and a desired endpoint. Training samples an ordered triple of indices $i<k<j$ from a replay sequence, with $k=\lfloor(i+j)/2\rfloor$, and fits the midpoint coordinates by negative log likelihood. Triples crossing trajectory identifiers are masked out. The learning rate is $3\times10^{-4}$ and the generator warm-up is 1M interaction steps. Component scales are constrained between 0.05 and 2, with initial scale and offset scale 0.5. Candidate count stays at 32.

The unprojected variant uses sampled goal-coordinate proposals as the waypoint targets. The projected variant maps proposals to the nearest pool state's goal coordinates, then applies the same scorer. Projection returns a stored target but can move a proposal across an obstacle. All three variants retain the same deployment policy input. The default DCP configuration uses the visited-state pool.

\subsection{Demonstration-state libraries}
We construct each task's libraries from nine successful DCP episodes, three per source seed. The successful-path library contains 128 states chosen around object displacement. For the control, we take the same number of states from those episodes before meaningful object motion. This lets us test the value of the states supplied to the pool while keeping replay and contrastive training unchanged.

In the reserved-slot comparison, each library occupies 128 of the 512 slots, leaving 384 online slots. The augmented comparison instead preserves all 512 online slots and adds 128 library slots. It draws 40 candidates so that the expected number of online draws remains 32, approximately preserving each online state's exposure. The reserved-slot experiment tests the value of library content under constrained capacity. The augmented experiment tests the benefit of adding those states to the full online pool.

\begin{table}[h]
\centering
\caption{PusherHard demonstration-state experiments, three target seeds. Entries are mean evaluation success over training (\%). Paired wins compare successful-path against pre-motion seeding. DCP without seeding scores 35.76\% in these experiments.}
\label{tab:demo-seeding}
\begin{tabular}{lrrr}
\toprule
Pool intervention & Pre-motion & Successful path & Paired wins \\
\midrule
Reserve 128 of 512 slots & 32.39 & 38.17 & 3/3 \\
Add 128 to 512 online slots & 38.09 & 36.26 & 0/3 \\
\bottomrule
\end{tabular}
\end{table}

Table~\ref{tab:demo-seeding} shows that useful demonstration states help when they replace part of a capacity-limited pool: successful-path seeding exceeds the matched pre-motion control in all three seeds. Adding these states to the complete online pool gives no corresponding benefit. The comparison therefore depends on both the content of the added states and which online candidates they replace.

For AntSoccer, successful-path seeding improves mean waypoint/final-goal direction cosine from 0.596 to 0.655 and latent slack from 1.096 to 1.000, without improving ball motion or success. Improving candidate availability and selection is insufficient to resolve this task.

\section{Additional Analyses}
\label{app:diagnostics}
\subsection{Learning early in training and maintaining success}
The direction input is learned online, so we also examine whether an initially inaccurate representation delays learning and whether success is maintained once it begins. In a three-seed analysis of the nine tasks, DCP crosses 2\% success earlier than CRL on seven tasks and ties on two. At 5\% and 10\%, it is earlier on six, tied on one, and later on two. Its success over the first 25M steps is higher on seven tasks, tied on AntPush, and lower on AntSoccer. These comparisons concern early learning, separately from the final five-seed results.

After first crossing each threshold, we measure the fraction of the next ten evaluations that falls below it. The median fraction at 5\% and 10\% is 0\% for DCP, compared with 10\% and 20\% for CRL. This measures persistence after initial success.

On AntPush, mean DCP success decreases from 44.8\% at 75M steps to 42.7\% at 100M steps in the five-seed results. The learning curves show the full course of training, including this late decline.

\subsection{AntSoccer representation and activity}
To understand what happens on AntSoccer, we examine how much the representation changes and how often the agent moves the ball. Across five seeds, we compare distance ordering on fixed coordinate pairs with its ordering at initialization. On AntSoccer, the mean Spearman-correlation change is only $-0.050$ at 25M and $-0.021$ at the final checkpoint; the final changes are larger in magnitude on AntPush and the Pusher tasks. Only 3.8\% of retained consecutive AntSoccer transitions move the ball by more than $10^{-3}$. AntSoccer combines sparse object movement with little change in distance ordering.

We measured these signals in completed runs. They identify AntSoccer as unusual among the evaluated tasks. Tracking fixed coordinate probes and replay activity would also make them available during training; predicting individual-run outcomes requires further testing.

\subsection{Frozen CRL representations}
If direction conditioning can use geometry learned by CRL, it should also be possible to train a new actor on a frozen CRL representation. We test this directly. A single-seed AntMaze Big experiment starts from a CRL checkpoint trained for 50M steps and trains a new DCP actor for 25M additional interactions. One variant freezes only the goal encoder and adapts the state--action encoder; another freezes both encoders. A fresh joint DCP receives 25M steps without pretraining.

\begin{table}[h]
\centering
\caption{Frozen-representation transfer on AntMaze Big, seed 42. The transferred variants use 50M CRL pretraining steps and 25M actor-training steps; fresh DCP uses 25M steps in total.}
\label{tab:frozen-transfer}
\begin{tabular}{lrrr}
\toprule
Variant & Pretraining & Final success (\%) & Mean training success (\%) \\
\midrule
Fresh joint DCP & 0 & 25.4 & 16.72 \\
Frozen CRL goal encoder & 50M & 22.3 & 11.60 \\
Both CRL encoders frozen & 50M & 23.0 & 12.46 \\
\bottomrule
\end{tabular}
\end{table}

Table~\ref{tab:frozen-transfer} shows that a new direction-conditioned actor can use a CRL representation even when both encoders are frozen, reaching 23.0\% final success. Fresh DCP reaches 25.4\% with a smaller total interaction budget. This establishes reuse in the tested run, while the unequal pretraining budgets preclude a sample-efficiency comparison.

The frozen encoder parameters remain unchanged at all 26 checkpoints examined during adaptation. Separate implementation checks confirm that 15 CRL checkpoints across AntMaze Big, PusherHard, and AntSoccer can be loaded and kept frozen while the actor is updated. The AntMaze Big experiment demonstrates behavioral transfer for one seed. Its generality across seeds and tasks remains to be tested.

\clearpage
\section{Additional Visualizations}
\label{app:visualizations}
\paragraph{Variation across training seeds.}
Figure~\ref{fig:geometry-seeds} expands Table~\ref{tab:geometry} into individual-seed comparisons. Each line connects encoders trained with the same seed. The first two panels measure how well learned distances order graph shortest-path distances, over all state pairs and over the longest-distance third. DCP improves both measures in every seed. The third panel measures local descent: whether decreasing learned distance initially points toward a graph-optimal move. Its larger variation shows that improved global distance ordering can coexist with local directional errors.
\begin{figure}[ht]
\centering
\includegraphics[width=\linewidth]{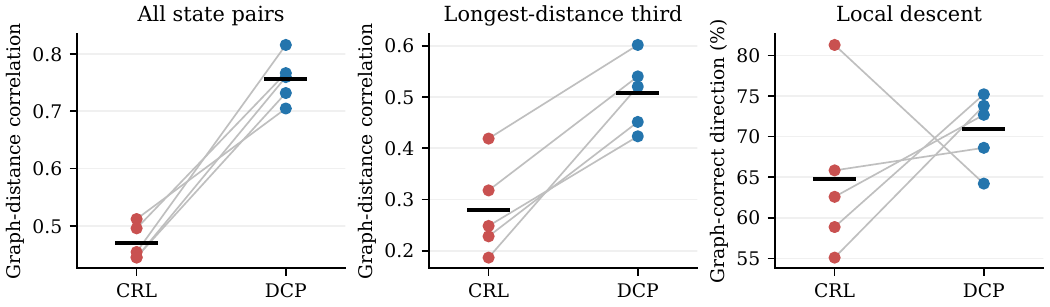}
\caption{U-maze representation measurements for all five seeds after 10M steps. Connected dots compare CRL and DCP with the same seed; black horizontal marks show means. DCP has higher graph-distance correlation in every seed, both over all pairs and over the longest-distance third. The local-descent comparison shows more variation across seeds.}
\label{fig:geometry-seeds}
\end{figure}

\paragraph{Distance fields in AntMaze Big.}
Figures~\ref{fig:antmaze-fields} and~\ref{fig:antmaze-fields-seed1} place learned distances alongside executed trajectories. For a fixed goal, we evaluate the goal encoder over spatial coordinates and plot the resulting distance to the goal embedding. The first example shows successful trajectories for both methods; the second shows DCP reaching the goal while CRL stalls. Comparing the contours with the maze walls helps reveal whether nearby physical positions are separated by learned distance when a wall forces a detour. Colors are scaled within each panel, so the comparison concerns spatial structure. These examples illustrate behaviors that contribute to the aggregate results; the five-seed learning curves report their broader variability.

\begin{figure}[ht]
\centering
\includegraphics[width=0.9\linewidth]{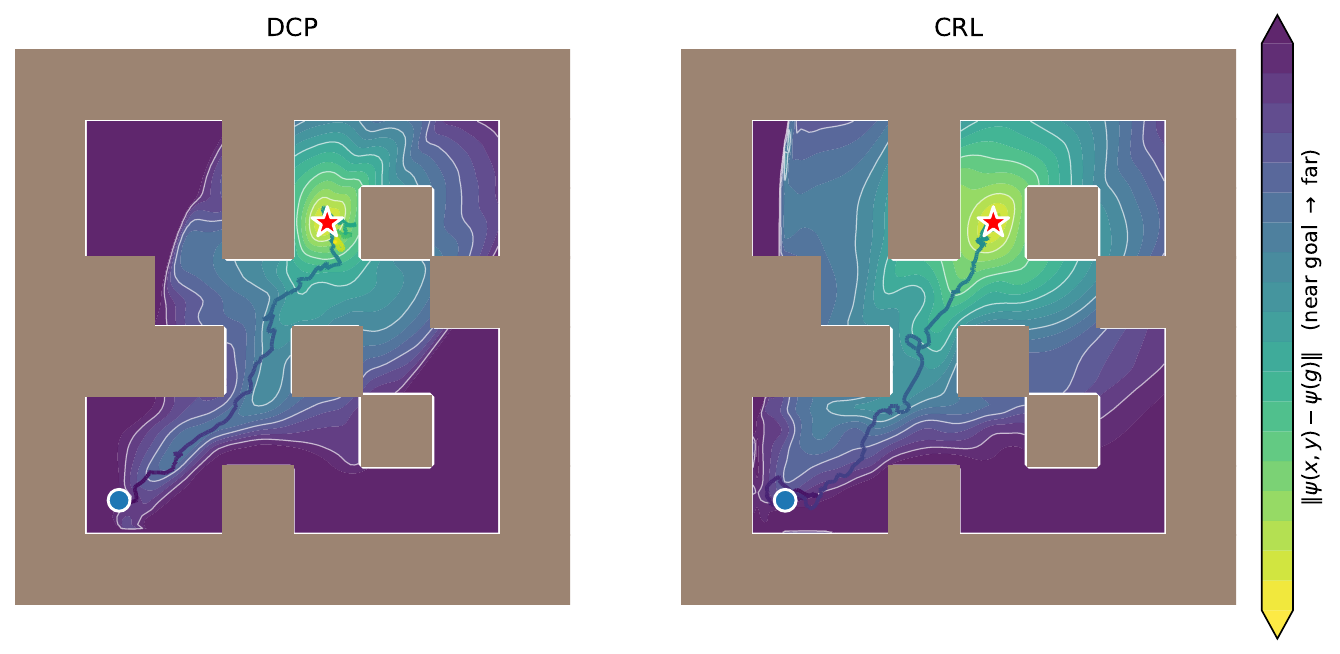}
\caption{AntMaze Big distance fields and example trajectories for a seed in which both DCP and CRL reach the goal. Contours show $\|\psi(x,y)-\psi(g)\|$; the blue circle and red star mark start and goal. Both fields reflect the corridor layout. Colors are scaled separately for each method.}
\label{fig:antmaze-fields}
\end{figure}

\begin{figure}[ht]
\centering
\includegraphics[width=0.9\linewidth]{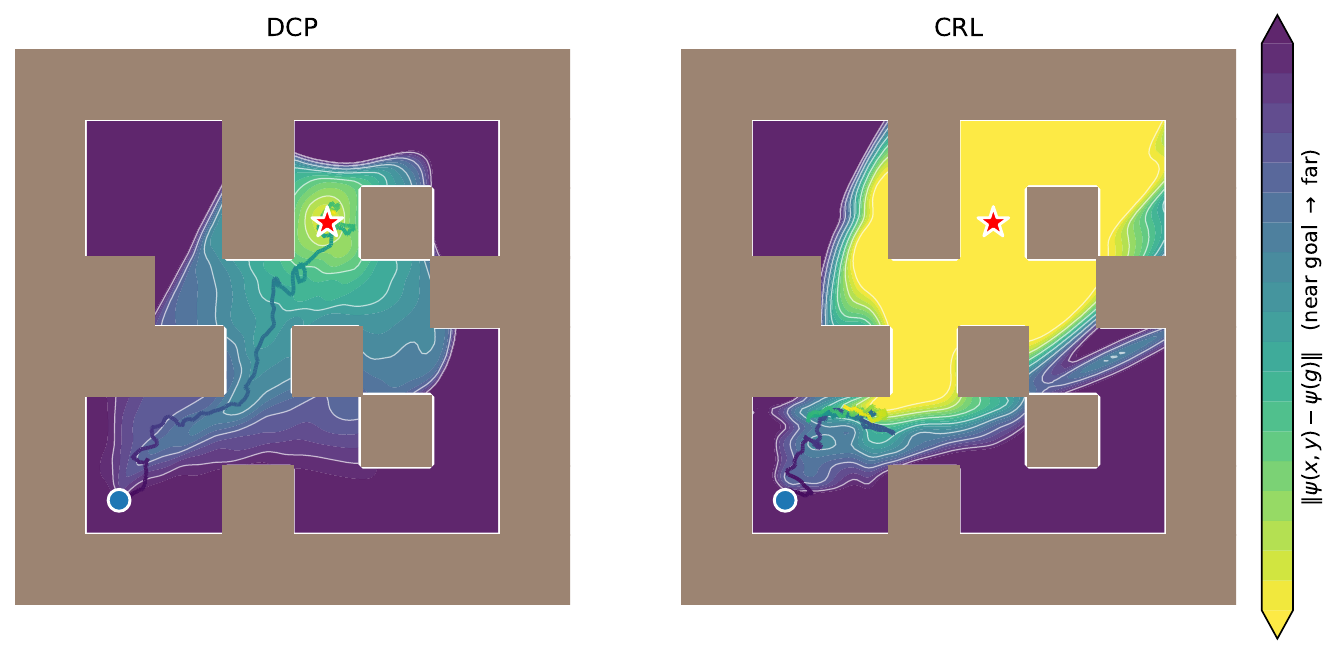}
\caption{A second AntMaze Big example (seed 1), in which DCP reaches the goal and CRL stalls. The color range in each panel is clipped to the 1st--99th percentiles of distances along that method's trajectory. The field and trajectory illustrate one outcome; the main comparison averages all five seeds.}
\label{fig:antmaze-fields-seed1}
\end{figure}

\paragraph{Pusher target placement.}
Figure~\ref{fig:pusher-views} shows how target placement changes the required cylinder motion. The arm must first approach the cylinder from a suitable side and then maintain contact while pushing toward the target. The near-target image is drawn from PusherHard's target distribution; PusherHard Far restricts the target distribution as specified in Appendix~\ref{app:tasks}. These views help interpret why reaching the cylinder alone is insufficient for success.

\begin{figure}[ht]
\centering
\begin{minipage}{0.32\linewidth}
\centering\includegraphics[width=\linewidth]{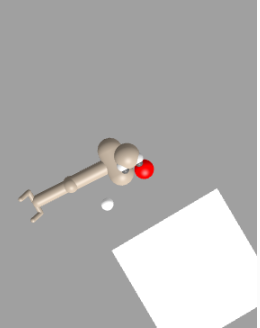}\\
PusherHard: near target
\end{minipage}\hspace{0.06\linewidth}
\begin{minipage}{0.32\linewidth}
\centering\includegraphics[width=\linewidth]{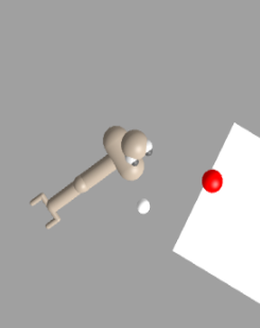}\\
PusherHard Far
\end{minipage}
\caption{Additional Pusher views. The white cylinder must be moved to the red target. Target placement changes the required approach and pushing direction. The near-target view is an example from PusherHard.}
\label{fig:pusher-views}
\end{figure}

\paragraph{Comparing waypoint scoring rules.}
Figure~\ref{fig:ranking} holds the encoder and sampled candidate sets fixed while changing the selection rule. An exact forward on-path waypoint is available in 77.7\% of draws. The gap between availability and selection therefore measures missed opportunities within the sampled set. Direction-agreement and latent-slack scoring select on-path candidates more frequently than the default inner product in this test. Evaluating these rules during online training would additionally measure how their choices change the subsequently collected experience.

\begin{figure}[ht]
\centering
\includegraphics[width=0.9\linewidth]{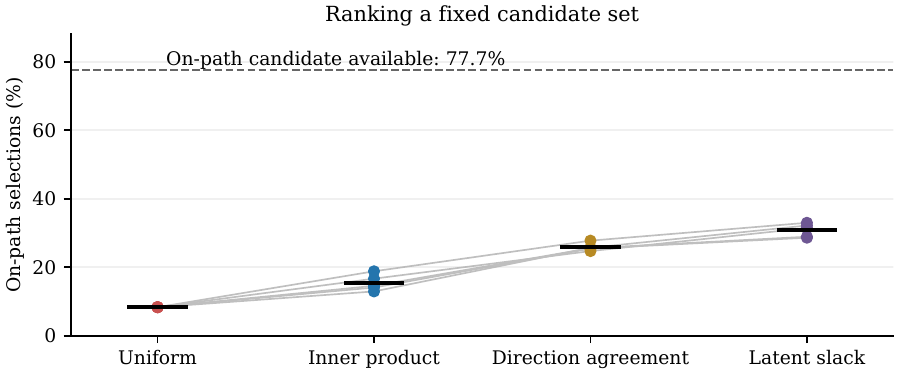}
\caption{Waypoint selection with five fixed U-maze encoders. Connected points use the same encoder and candidate draws; black marks show means. A forward on-path candidate is available in 77.7\% of draws, but the default inner-product rule selects one much less often. Ranking by direction agreement or latent slack improves this fixed-encoder test.}
\label{fig:ranking}
\end{figure}

\paragraph{Generating candidates beyond the visited-state pool.}
Figure~\ref{fig:candidate-sources} expands Table~\ref{tab:generator} into paired seed results. Each point averages success across the 20 evaluation checkpoints of a complete training run. The generator improves over the pool in four of five seeds in the two-route maze and all five in the rare-gateway maze. Projecting generated candidates onto visited states improves the mean in the two-route maze but reduces it in the rare-gateway maze. This difference is consistent with the layouts: projection can preserve useful route choices when visited states cover both routes, but can remove proposals beyond a poorly explored gateway.

\begin{figure}[ht]
\centering
\includegraphics[width=\linewidth]{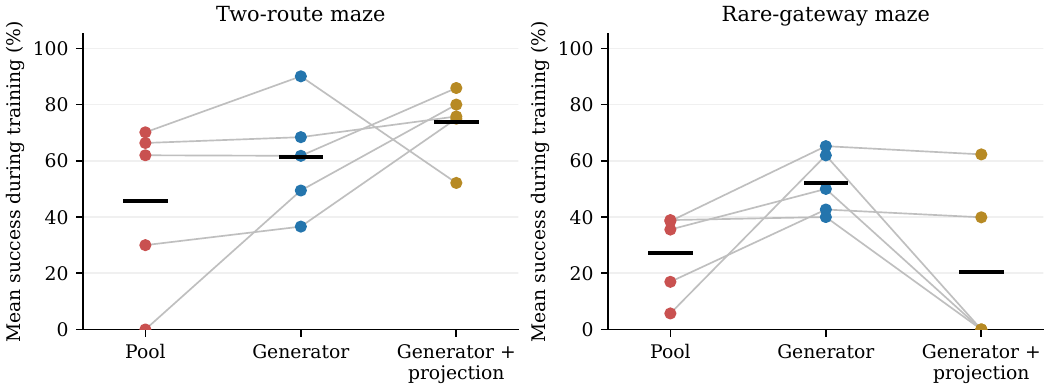}
\caption{Candidate-source comparison with all five seeds shown. Lines connect runs with the same seed; black marks show means. The generator improves mean success during training in both layouts. Nearest-state projection has different effects in the two mazes and removes the mean benefit in the rare-gateway maze. Each value averages the 20 evaluation checkpoints.}
\label{fig:candidate-sources}
\end{figure}
\clearpage

\section{Complete Benchmark Checkpoints}
\label{app:curves}
Figure~\ref{fig:curves} gives the complete nine-task learning curves. The following tables give the five-seed checkpoint evaluation. Nominal checkpoints are matched to the nearest available logged evaluation. Dashes indicate that a checkpoint lies beyond the task's training budget. Student-$t$ intervals are reported without clipping to the metric's bounds.

Table~\ref{tab:checkpoints-success} reports whether the goal region is reached at least once per episode. Reading across a task shows when performance improves and whether it persists. On PusherHard, DCP reaches 33.7\% at 25M steps, compared with 0.9\% for CRL; CRL later improves to 28.8\% at 100M, while DCP reaches 47.7\%. AntPush shows a different pattern: DCP's mean falls from 44.8\% at 75M to 42.7\% at 100M. CRL retains the higher mean at every reported AntSoccer checkpoint. Bold entries identify the largest mean, without implying statistical significance.

Table~\ref{tab:checkpoints-time} reports how long the agent stays near the goal. Comparing it with Table~\ref{tab:checkpoints-success} separates occasional arrival from sustained proximity. At the final checkpoint, DCP's PusherEasy success of 55.4\% accompanies 297.8 steps near the goal, compared with CRL's 4.4\% and 26.2 steps. On Humanoid U-Maze, DCP reaches the goal in 8.4\% of episodes but averages only 3.6 steps near it, indicating brief arrivals. The intervals describe variation across the five training seeds; the full curves in Figure~\ref{fig:curves} provide the evaluations between these selected checkpoints.
\begin{table}[!ht]
\centering\small
\caption{Success at least once (\%), mean $\pm$ reported 95\% confidence-interval half-width over five seeds. Bold marks the largest mean at each task/checkpoint.}
\label{tab:checkpoints-success}
\setlength{\tabcolsep}{3pt}
\input{tables/checkpoint_5seed_success_any}
\end{table}
\begin{table}[p]
\centering\small
\caption{Time near goal (steps per 1,000-step episode), mean $\pm$ reported 95\% confidence-interval half-width over five seeds.}
\label{tab:checkpoints-time}
\setlength{\tabcolsep}{3pt}
\input{tables/checkpoint_5seed_success}
\end{table}

%% file: tables/checkpoint_5seed_success_any.tex
\begin{tabular}{llccccc}
\toprule
Environment & Method & 10M & 25M & 50M & 75M & 100M \\
\midrule
AntMaze Big & DCP & \textbf{18.4 $\pm$ 5.4} & \textbf{23.8 $\pm$ 3.3} & \textbf{29.9 $\pm$ 2.9} & -- & -- \\
 & CRL & 15.1 $\pm$ 3.7 & 17.7 $\pm$ 5.3 & 18.3 $\pm$ 6.5 & -- & -- \\
 & SSGC & 18.3 $\pm$ 1.2 & 22.0 $\pm$ 2.9 & 23.3 $\pm$ 3.0 & -- & -- \\
\midrule
AntMaze Hardest & DCP & \textbf{9.5 $\pm$ 2.5} & \textbf{14.5 $\pm$ 2.1} & \textbf{16.1 $\pm$ 2.3} & -- & -- \\
 & CRL & 5.8 $\pm$ 2.5 & 10.5 $\pm$ 3.8 & 12.2 $\pm$ 5.6 & -- & -- \\
 & SSGC & 7.7 $\pm$ 1.6 & 13.2 $\pm$ 4.8 & 14.2 $\pm$ 8.0 & -- & -- \\
\midrule
Ant U-Maze & DCP & \textbf{40.4 $\pm$ 9.5} & \textbf{52.9 $\pm$ 6.8} & \textbf{55.0 $\pm$ 13.3} & -- & -- \\
 & CRL & 26.8 $\pm$ 4.5 & 40.7 $\pm$ 8.5 & 40.8 $\pm$ 10.2 & -- & -- \\
 & SSGC & 28.4 $\pm$ 9.6 & 43.1 $\pm$ 8.1 & 49.8 $\pm$ 4.2 & -- & -- \\
\midrule
Humanoid U-Maze & DCP & 4.6 $\pm$ 8.4 & 6.7 $\pm$ 11.4 & \textbf{13.4 $\pm$ 9.5} & \textbf{7.7 $\pm$ 9.2} & \textbf{8.4 $\pm$ 9.6} \\
 & CRL & \textbf{9.8 $\pm$ 11.5} & \textbf{7.0 $\pm$ 12.0} & 3.0 $\pm$ 8.5 & 0.0 $\pm$ 0.0 & 0.0 $\pm$ 0.0 \\
 & SSGC & 0.0 $\pm$ 0.0 & 0.0 $\pm$ 0.0 & 0.0 $\pm$ 0.0 & 2.5 $\pm$ 6.9 & 0.0 $\pm$ 0.0 \\
\midrule
AntPush & DCP & 0.0 $\pm$ 0.0 & \textbf{1.1 $\pm$ 2.8} & \textbf{29.1 $\pm$ 19.3} & \textbf{44.8 $\pm$ 8.0} & 42.7 $\pm$ 12.6 \\
 & CRL & 0.0 $\pm$ 0.0 & 0.0 $\pm$ 0.0 & 28.9 $\pm$ 6.3 & 38.2 $\pm$ 8.8 & 45.6 $\pm$ 24.1 \\
 & SSGC & 0.0 $\pm$ 0.0 & 0.0 $\pm$ 0.0 & 28.0 $\pm$ 32.1 & 40.8 $\pm$ 21.1 & \textbf{51.4 $\pm$ 15.5} \\
\midrule
PusherEasy & DCP & \textbf{2.8 $\pm$ 2.2} & \textbf{30.9 $\pm$ 35.0} & \textbf{55.4 $\pm$ 9.6} & -- & -- \\
 & CRL & 1.9 $\pm$ 0.9 & 2.2 $\pm$ 1.8 & 4.4 $\pm$ 4.1 & -- & -- \\
 & SSGC & 1.9 $\pm$ 0.9 & 7.2 $\pm$ 10.8 & 34.1 $\pm$ 38.5 & -- & -- \\
\midrule
PusherHard & DCP & \textbf{2.0 $\pm$ 1.9} & \textbf{33.7 $\pm$ 14.2} & \textbf{37.2 $\pm$ 11.7} & \textbf{45.6 $\pm$ 10.8} & \textbf{47.7 $\pm$ 5.8} \\
 & CRL & 0.3 $\pm$ 0.2 & 0.9 $\pm$ 0.6 & 2.0 $\pm$ 2.5 & 7.3 $\pm$ 12.8 & 28.8 $\pm$ 32.4 \\
 & SSGC & 0.2 $\pm$ 0.3 & 1.2 $\pm$ 1.4 & 8.4 $\pm$ 14.8 & 21.6 $\pm$ 26.4 & 27.8 $\pm$ 30.7 \\
\midrule
PusherHard Far & DCP & \textbf{1.1 $\pm$ 2.8} & \textbf{25.8 $\pm$ 17.3} & \textbf{47.4 $\pm$ 5.7} & -- & -- \\
 & CRL & 0.0 $\pm$ 0.0 & 1.6 $\pm$ 1.9 & 12.5 $\pm$ 24.0 & -- & -- \\
 & SSGC & 0.0 $\pm$ 0.0 & 0.2 $\pm$ 0.3 & 9.0 $\pm$ 15.1 & -- & -- \\
\midrule
AntSoccer & DCP & 2.6 $\pm$ 0.9 & 8.8 $\pm$ 2.9 & 12.9 $\pm$ 4.3 & -- & -- \\
 & CRL & \textbf{5.6 $\pm$ 3.6} & \textbf{14.9 $\pm$ 4.8} & \textbf{22.6 $\pm$ 3.4} & -- & -- \\
 & SSGC & 2.5 $\pm$ 1.3 & 7.0 $\pm$ 2.5 & 14.1 $\pm$ 3.6 & -- & -- \\
\bottomrule
\end{tabular}

%% file: tables/checkpoint_5seed_success.tex
\begin{tabular}{llccccc}
\toprule
Environment & Method & 10M & 25M & 50M & 75M & 100M \\
\midrule
AntMaze Big & DCP & 78.7 $\pm$ 37.1 & \textbf{94.6 $\pm$ 26.2} & \textbf{115.3 $\pm$ 14.6} & -- & -- \\
 & CRL & 75.3 $\pm$ 12.7 & 83.2 $\pm$ 29.4 & 89.2 $\pm$ 28.0 & -- & -- \\
 & SSGC & \textbf{91.9 $\pm$ 19.1} & 94.6 $\pm$ 20.5 & 99.8 $\pm$ 9.0 & -- & -- \\
\midrule
AntMaze Hardest & DCP & 30.3 $\pm$ 12.6 & \textbf{65.1 $\pm$ 12.9} & 63.2 $\pm$ 13.3 & -- & -- \\
 & CRL & 28.4 $\pm$ 15.9 & 39.9 $\pm$ 24.0 & 48.1 $\pm$ 23.7 & -- & -- \\
 & SSGC & \textbf{32.6 $\pm$ 8.1} & 59.0 $\pm$ 21.0 & \textbf{66.0 $\pm$ 41.4} & -- & -- \\
\midrule
Ant U-Maze & DCP & \textbf{195.3 $\pm$ 58.0} & 223.5 $\pm$ 27.9 & 238.1 $\pm$ 76.2 & -- & -- \\
 & CRL & 161.9 $\pm$ 33.1 & \textbf{233.2 $\pm$ 62.8} & \textbf{261.3 $\pm$ 77.2} & -- & -- \\
 & SSGC & 157.7 $\pm$ 65.3 & 230.4 $\pm$ 38.6 & 252.7 $\pm$ 52.1 & -- & -- \\
\midrule
Humanoid U-Maze & DCP & \textbf{1.3 $\pm$ 2.5} & \textbf{4.9 $\pm$ 9.5} & \textbf{3.4 $\pm$ 2.5} & \textbf{1.4 $\pm$ 1.9} & \textbf{3.6 $\pm$ 4.5} \\
 & CRL & 1.0 $\pm$ 1.7 & 1.4 $\pm$ 3.0 & 0.9 $\pm$ 2.5 & 0.0 $\pm$ 0.0 & 0.0 $\pm$ 0.0 \\
 & SSGC & 0.0 $\pm$ 0.0 & 0.0 $\pm$ 0.0 & 0.0 $\pm$ 0.0 & 0.4 $\pm$ 1.1 & 0.0 $\pm$ 0.0 \\
\midrule
AntPush & DCP & 0.0 $\pm$ 0.0 & \textbf{3.8 $\pm$ 10.4} & \textbf{131.8 $\pm$ 113.5} & \textbf{190.9 $\pm$ 71.6} & 181.7 $\pm$ 95.6 \\
 & CRL & 0.0 $\pm$ 0.0 & 0.0 $\pm$ 0.0 & 95.9 $\pm$ 42.6 & 122.7 $\pm$ 30.1 & 143.4 $\pm$ 105.6 \\
 & SSGC & 0.0 $\pm$ 0.0 & 0.0 $\pm$ 0.0 & 103.2 $\pm$ 114.9 & 182.9 $\pm$ 83.9 & \textbf{199.1 $\pm$ 93.3} \\
\midrule
PusherEasy & DCP & 18.2 $\pm$ 9.0 & \textbf{148.2 $\pm$ 179.5} & \textbf{297.8 $\pm$ 90.9} & -- & -- \\
 & CRL & \textbf{18.8 $\pm$ 9.3} & 20.3 $\pm$ 15.1 & 26.2 $\pm$ 8.4 & -- & -- \\
 & SSGC & \textbf{18.8 $\pm$ 9.3} & 40.0 $\pm$ 49.3 & 180.7 $\pm$ 183.4 & -- & -- \\
\midrule
PusherHard & DCP & \textbf{9.3 $\pm$ 4.2} & \textbf{161.5 $\pm$ 84.4} & \textbf{185.7 $\pm$ 45.4} & \textbf{285.3 $\pm$ 103.5} & \textbf{280.9 $\pm$ 28.5} \\
 & CRL & 3.1 $\pm$ 2.2 & 9.4 $\pm$ 5.5 & 13.1 $\pm$ 9.9 & 36.2 $\pm$ 66.8 & 125.7 $\pm$ 136.7 \\
 & SSGC & 2.3 $\pm$ 2.7 & 9.4 $\pm$ 8.1 & 54.0 $\pm$ 99.4 & 133.1 $\pm$ 161.3 & 170.5 $\pm$ 186.0 \\
\midrule
PusherHard Far & DCP & \textbf{2.5 $\pm$ 4.8} & \textbf{124.1 $\pm$ 88.4} & \textbf{260.7 $\pm$ 66.8} & -- & -- \\
 & CRL & 0.0 $\pm$ 0.0 & 10.7 $\pm$ 15.9 & 48.4 $\pm$ 80.4 & -- & -- \\
 & SSGC & 0.0 $\pm$ 0.0 & 1.6 $\pm$ 2.7 & 49.0 $\pm$ 82.3 & -- & -- \\
\midrule
AntSoccer & DCP & 11.2 $\pm$ 8.8 & 33.6 $\pm$ 18.7 & 42.4 $\pm$ 22.1 & -- & -- \\
 & CRL & \textbf{20.8 $\pm$ 10.5} & \textbf{46.3 $\pm$ 13.8} & \textbf{76.4 $\pm$ 7.7} & -- & -- \\
 & SSGC & 9.4 $\pm$ 6.9 & 29.8 $\pm$ 12.6 & 48.2 $\pm$ 31.9 & -- & -- \\
\bottomrule
\end{tabular}

%% file: refs.bib
@inproceedings{eysenbach2022crl,
  author = {Eysenbach, Benjamin and Zhang, Tianjun and Levine, Sergey and Salakhutdinov, Russ R},
  title = {Contrastive Learning as Goal-Conditioned Reinforcement Learning},
  booktitle = {Advances in Neural Information Processing Systems},
  volume = {35}, pages = {35603--35620}, year = {2022},
  url = {https://proceedings.neurips.cc/paper_files/paper/2022/file/e7663e974c4ee7a2b475a4775201ce1f-Paper-Conference.pdf}
}

@inproceedings{bortkiewicz2025jaxgcrl,
  author = {Bortkiewicz, Micha{\l} and Pa{\l}ucki, W{\l}adys{\l}aw and Myers, Vivek and Dziarmaga, Tadeusz and Arczewski, Tomasz and Kuci{\'n}ski, {\L}ukasz and Eysenbach, Benjamin},
  title = {Accelerating Goal-Conditioned Reinforcement Learning Algorithms and Research},
  booktitle = {International Conference on Learning Representations}, year = {2025},
  url = {https://arxiv.org/abs/2408.11052}
}

@inproceedings{wang2025depth,
  author = {Wang, Kevin and Javali, Ishaan and Bortkiewicz, Micha{\l} and Trzcinski, Tomasz and Eysenbach, Benjamin},
  title = {1000 Layer Networks for Self-Supervised {RL}: Scaling Depth Can Enable New Goal-Reaching Capabilities},
  booktitle = {Advances in Neural Information Processing Systems},
  volume = {38}, pages = {157643--157670}, year = {2025},
  url = {https://proceedings.neurips.cc/paper_files/paper/2025/file/e74ee34cc0f2d0780f34ee77d8fba25b-Paper-Conference.pdf}
}

@article{korniak2026chunks,
  author = {Korniak, Michal and Dybek, Kamil and Eysenbach, Benjamin and Bagatella, Marco and Bortkiewicz, Micha{\l}},
  title = {Three Steps at a Time: Learning Representations from Action Sequences in Contrastive {RL}},
  journal = {arXiv preprint arXiv:2608.30640}, year = {2026},
  url = {https://arxiv.org/abs/2608.30640}
}

@inproceedings{park2024hilp,
  author = {Park, Seohong and Kreiman, Tobias and Levine, Sergey},
  title = {Foundation Policies with {Hilbert} Representations},
  booktitle = {Proceedings of the 41st International Conference on Machine Learning},
  series = {Proceedings of Machine Learning Research}, volume = {235},
  pages = {39737--39761}, year = {2024},
  url = {https://proceedings.mlr.press/v235/park24g.html}
}

@inproceedings{park2024metra,
  author = {Park, Seohong and Rybkin, Oleh and Levine, Sergey},
  title = {{METRA}: Scalable Unsupervised {RL} with Metric-Aware Abstraction},
  booktitle = {International Conference on Learning Representations}, year = {2024},
  url = {https://arxiv.org/abs/2310.08887}
}

@inproceedings{devin2019plan,
  author = {Devin, Coline and Geng, Daniel and Abbeel, Pieter and Darrell, Trevor and Levine, Sergey},
  title = {Plan Arithmetic: Compositional Plan Vectors for Multi-Task Control},
  booktitle = {Advances in Neural Information Processing Systems}, year = {2019},
  url = {https://arxiv.org/abs/1910.14033}
}

@inproceedings{haarnoja2018sac,
  author = {Haarnoja, Tuomas and Zhou, Aurick and Abbeel, Pieter and Levine, Sergey},
  title = {Soft Actor-Critic: Off-Policy Maximum Entropy Deep Reinforcement Learning with a Stochastic Actor},
  booktitle = {Proceedings of the 35th International Conference on Machine Learning},
  volume = {80}, pages = {1861--1870}, year = {2018},
  url = {https://proceedings.mlr.press/v80/haarnoja18b.html}
}

@inproceedings{freeman2021brax,
  author = {Freeman, Daniel and Frey, Erik and Raichuk, Anton and Girgin, Sertan and Mordatch, Igor and Bachem, Olivier},
  title = {Brax---A Differentiable Physics Engine for Large Scale Rigid Body Simulation},
  booktitle = {Proceedings of the Neural Information Processing Systems Track on Datasets and Benchmarks},
  volume = {1}, year = {2021},
  url = {https://datasets-benchmarks-proceedings.neurips.cc/paper_files/paper/2021/file/d1f491a404d6854880943e5c3cd9ca25-Paper-round1.pdf}
}

@inproceedings{eysenbach2024interpolation,
  author = {Eysenbach, Benjamin and Myers, Vivek and Salakhutdinov, Ruslan and Levine, Sergey},
  title = {Inference via Interpolation: Contrastive Representations Provably Enable Planning and Inference},
  booktitle = {Advances in Neural Information Processing Systems},
  volume = {37}, pages = {58901--58928}, year = {2024},
  doi = {10.52202/079017-1878},
  url = {https://arxiv.org/abs/2403.04082}
}

@inproceedings{myers2025horizon,
  author = {Myers, Vivek and Ji, Catherine and Eysenbach, Benjamin},
  title = {Horizon Generalization in Reinforcement Learning},
  booktitle = {International Conference on Learning Representations}, year = {2025},
  url = {https://arxiv.org/abs/2501.02709}
}

@inproceedings{myers2024distances,
  author = {Myers, Vivek and Zheng, Chongyi and Dragan, Anca and Levine, Sergey and Eysenbach, Benjamin},
  title = {Learning Temporal Distances: Contrastive Successor Features Can Provide a Metric Structure for Decision-Making},
  booktitle = {Proceedings of the 41st International Conference on Machine Learning},
  series = {Proceedings of Machine Learning Research}, volume = {235},
  pages = {37076--37096}, year = {2024},
  url = {https://proceedings.mlr.press/v235/myers24a.html}
}

@inproceedings{wang2023qrl,
  author = {Wang, Tongzhou and Torralba, Antonio and Isola, Phillip and Zhang, Amy},
  title = {Optimal Goal-Reaching Reinforcement Learning via Quasimetric Learning},
  booktitle = {Proceedings of the 40th International Conference on Machine Learning},
  series = {Proceedings of Machine Learning Research}, volume = {202},
  pages = {36411--36430}, year = {2023},
  url = {https://proceedings.mlr.press/v202/wang23al.html}
}

@inproceedings{liu2025single,
  title={A Single Goal Is All You Need: Skills and Exploration Emerge from Contrastive {RL} Without Rewards, Demonstrations, or Subgoals},
  author={Liu, Grace and Tang, Michael and Eysenbach, Benjamin},
  booktitle={International Conference on Learning Representations},
  year={2025},
  url={https://proceedings.iclr.cc/paper_files/paper/2025/file/c34ddd05eb089991f06f3c5dc36836e0-Paper-Conference.pdf}
}

@inproceedings{park2026dual,
  title={Dual Goal Representations},
  author={Park, Seohong and Mann, Deepinder and Levine, Sergey},
  booktitle={International Conference on Learning Representations},
  year={2026},
  url={https://openreview.net/forum?id=aMKFTidLSM}
}

@misc{vezhnevets2017feudal,
      title={FeUdal Networks for Hierarchical Reinforcement Learning}, 
      author={Alexander Sasha Vezhnevets and Simon Osindero and Tom Schaul and Nicolas Heess and Max Jaderberg and David Silver and Koray Kavukcuoglu},
      year={2017},
      eprint={1703.01161},
      archivePrefix={arXiv},
      primaryClass={cs.AI},
      url={https://arxiv.org/abs/1703.01161}, 
}

@inproceedings{nachum2018data,
author = {Nachum, Ofir and Gu, Shixiang and Lee, Honglak and Levine, Sergey},
title = {Data-efficient hierarchical reinforcement learning},
year = {2018},
publisher = {Curran Associates Inc.},
address = {Red Hook, NY, USA},
booktitle = {Proceedings of the 32nd International Conference on Neural Information Processing Systems},
pages = {3307–3317},
numpages = {11},
location = {Montr{\'e}al, Canada},
series = {NIPS'18}
}

@inproceedings{
park2022lipschitz,
title={Lipschitz-constrained Unsupervised Skill Discovery},
author={Seohong Park and Jongwook Choi and Jaekyeom Kim and Honglak Lee and Gunhee Kim},
booktitle={International Conference on Learning Representations},
year={2022},
url={https://openreview.net/forum?id=BGvt0ghNgA}
}

@inproceedings{park2023hiql,
 author = {Park, Seohong and Ghosh, Dibya and Eysenbach, Benjamin and Levine, Sergey},
 booktitle = {Advances in Neural Information Processing Systems},
 doi = {10.52202/075280-1515},
 editor = {A. Oh and T. Naumann and A. Globerson and K. Saenko and M. Hardt and S. Levine},
 pages = {34866--34891},
 publisher = {Curran Associates, Inc.},
 title = {HIQL: Offline Goal-Conditioned RL with Latent States as Actions},
 url = {https://proceedings.neurips.cc/paper_files/paper/2023/file/6d7c4a0727e089ed6cdd3151cbe8d8ba-Paper-Conference.pdf},
 volume = {36},
 year = {2023}
}

@inproceedings{
hartikainen2020dynamical,
title={Dynamical Distance Learning for Semi-Supervised and Unsupervised Skill Discovery},
author={Kristian Hartikainen and Xinyang Geng and Tuomas Haarnoja and Sergey Levine},
booktitle={International Conference on Learning Representations},
year={2020},
url={https://openreview.net/forum?id=H1lmhaVtvr}
}

@InProceedings{liu2023metric,
  author = {Bo Liu and Yihao Feng and Qiang Liu and Peter Stone},
  title = {Metric Residual Networks for Sample Efficient Goal-Conditioned Reinforcement Learning},
  booktitle = {Thirty-Seventh AAAI Conference on Artificial Intelligence (AAAI)},
  location = {Washington, DC, US},
  month = {Februray},
  year = {2023},
}

@inproceedings{myers2025offline,
title = "Offline Goal-conditioned Reinforcement Learning with Quasimetric Representations",
author = "Vivek Myers and Zheng, \{Bill Chunyuan\} and Benjamin Eysenbach and Sergey Levine",
note = "Publisher Copyright: {\textcopyright} 2025 Neural information processing systems foundation. All rights reserved.; 39th Conference on Neural Information Processing Systems, NeurIPS 2025 ; Conference date: 02-12-2025 Through 07-12-2025",
year = "2025",
language = "English (US)",
series = "Advances in Neural Information Processing Systems",
publisher = "Neural information processing systems foundation",
pages = "22292--22317",
booktitle = "39th Conference on Neural Information Processing Systems, NeurIPS 2025",
}

@inproceedings{
zheng2023stabilizing,
title={Stabilizing Contrastive {RL}: Techniques for Robotic Goal Reaching from Offline Data},
author={Chongyi Zheng and Benjamin Eysenbach and Homer Walke and Patrick Yin and Kuan Fang and Ruslan Salakhutdinov and Sergey Levine},
booktitle={Intrinsically-Motivated and Open-Ended Learning Workshop @NeurIPS2023},
year={2023},
url={https://openreview.net/forum?id=IAX12qUjYk}
}

@inproceedings{eysenbach2019search,
 author = {Eysenbach, Ben and Salakhutdinov, Russ R and Levine, Sergey},
 booktitle = {Advances in Neural Information Processing Systems},
 editor = {H. Wallach and H. Larochelle and A. Beygelzimer and F. d\textquotesingle Alch\'{e}-Buc and E. Fox and R. Garnett},
 pages = {},
 publisher = {Curran Associates, Inc.},
 title = {Search on the Replay Buffer: Bridging Planning and Reinforcement Learning},
 url = {https://proceedings.neurips.cc/paper_files/paper/2019/file/5c48ff18e0a47baaf81d8b8ea51eec92-Paper.pdf},
 volume = {32},
 year = {2019}
}

@misc{ecoffet2021first,
      title={Go-Explore: a New Approach for Hard-Exploration Problems}, 
      author={Adrien Ecoffet and Joost Huizinga and Joel Lehman and Kenneth O. Stanley and Jeff Clune},
      year={2021},
      eprint={1901.10995},
      archivePrefix={arXiv},
      primaryClass={cs.LG},
      url={https://arxiv.org/abs/1901.10995}, 
}

@inproceedings{
zhou2025flattening,
title={Flattening Hierarchies with Policy Bootstrapping},
author={John Luoyu Zhou and Jonathan Kao},
booktitle={The Thirty-ninth Annual Conference on Neural Information Processing Systems},
year={2025},
url={https://openreview.net/forum?id=KaD2Dw8Ahz}
}

@InProceedings{jurgenson2020sub,
  title = 	 {Sub-Goal Trees a Framework for Goal-Based Reinforcement Learning},
  author =       {Jurgenson, Tom and Avner, Or and Groshev, Edward and Tamar, Aviv},
  booktitle = 	 {Proceedings of the 37th International Conference on Machine Learning},
  pages = 	 {5020--5030},
  year = 	 {2020},
  editor = 	 {III, Hal Daumé and Singh, Aarti},
  volume = 	 {119},
  series = 	 {Proceedings of Machine Learning Research},
  month = 	 {13--18 Jul},
  publisher =    {PMLR},
  url = 	 {https://proceedings.mlr.press/v119/jurgenson20a.html}
}
